%% file: rtknnc_KAIS.tex
\documentclass[pdflatex,sn-basic,Numbered]{sn-jnl}

\usepackage{graphicx}
\usepackage{multirow}
\usepackage{amsmath,amssymb,amsfonts}
\usepackage{amsthm}
\usepackage{mathrsfs}
\usepackage[title]{appendix}
\usepackage{xcolor}
\usepackage{textcomp}
\usepackage{booktabs}
\usepackage{algorithm}
\usepackage{algorithmicx}
\usepackage{algpseudocode}
\usepackage{listings}
\usepackage{bm}

\newtheorem{theorem}{Theorem}
\newtheorem{proposition}[theorem]{Proposition}

\newcommand{\R}{\mathbb{R}}

\newcommand{\calG}{\mathcal{G}}
\newcommand{\calN}{\mathcal{N}}
\newcommand{\calP}{\mathcal{P}}
\newcommand{\calR}{\mathcal{R}}

\newcommand{\calV}{\mathcal{V}}
\newcommand{\KNN}{\operatorname{KNN}}
\newcommand{\RNN}{\operatorname{RNN}}

\newcommand{\argmin}{\operatorname*{arg\,min}}

\graphicspath{{./}}

\begin{document}

\title[Round-Trip KNN Clustering]{Round-Trip KNN Clustering: multiscale hierarchical cluster detection on directed nearest-neighbour graphs}

\author*[1]{\fnm{Eraldo P.} \sur{Marinho}}\email{pereira.marinho@unesp.br}
\author[1]{\fnm{Caetano Mazzoni} \sur{Ranieri}}\email{cm.ranieri@unesp.br}
\author[1]{\fnm{Fabricio A.} \sur{Breve}}\email{fabricio.breve@unesp.br}

\affil*[1]{\orgdiv{Department of Statistics, Applied Mathematics and Computing},
\orgname{S\~ao Paulo State University (UNESP), Institute of Geosciences and Exact Sciences},
\orgaddress{\street{Av. 24-A, 1515}, \city{Rio Claro}, \state{S\~ao Paulo}, \country{Brazil}}}

\abstract{
We introduce Round-Trip KNN Clustering (RTKNNC), a graph-based method for
finding cluster structure at several neighbourhood scales without requiring the
number of clusters in advance. Unlike approaches that first make a
$k$-nearest-neighbour (KNN) graph undirected, RTKNNC keeps both directions of
the neighbour relation: which points a given point selects and which points
select it. Incoming selections are treated as weighted votes that help decide
which local connections remain visible during a recursive forward-and-reverse
traversal. Repeating the procedure for increasing $K$ reveals how groups persist
or merge as the neighbourhood scale grows; for the reference inverse-square
model before structural refinement, clusters can merge but do not split.
Because graph connectivity can occasionally join distinct groups through a
sparse bridge or a small region of overlap, we add an optional label-free
refinement. It first tests whether an already formed component is better
described by two or three Gaussian subpopulations, and accepts a subdivision
only when the proposed groups are large enough and consistent with the visible
KNN graph. Across eight synthetic datasets and $K=2,\ldots,16$, independent C
and Python implementations produced identical partitions in all 120 reference
runs. Refinement increased adjusted Rand index from $0.7817$ to $0.9627$ on a
variable-density benchmark and from $0.8083$ to $0.9853$ on a sparse-bridge
benchmark. Comparisons with seven external clustering methods show competitive
performance while preserving a label-free cluster-construction process.
}
\keywords{k-nearest-neighbour graph, graph-based clustering, reverse nearest neighbours, round-trip closure, structural refinement, multiscale clustering}

\maketitle

\section{Introduction}\label{sec:introduction}

Nearest-neighbour relations provide a simple way to describe local structure in
data. They are used in classification, density estimation, graph construction,
manifold learning, outlier detection, and clustering
\citep{CoverHart1967,LoftsgaardenQuesenberry1965,Bentley1975,FriedmanBentleyFinkel1977,VonLuxburg2007}.
For clustering in particular, local neighbourhood information is attractive
because it can reveal non-convex or unevenly populated groups that are poorly
described by a centroid or by a single global distance threshold. Existing
approaches exploit shared neighbours, mutual or reciprocal KNN graphs, reverse
nearest neighbours, and density-based connectivity
\citep{JarvisPatrick1973,Ertoz2003SNN,BritoChavezQuirozYukich1997,GowdaKrishna1978,ChowdhuryAmorim2018,XieLeeWangChenZhou2019}.

A directed KNN graph contains two complementary views of every point. Its
outgoing links tell us which points it regards as neighbours; its incoming
links tell us which other points regard it as a neighbour. These views are not
generally equivalent, and making the graph undirected at the outset discards
that asymmetry. RTKNNC is built around keeping it. Each outgoing KNN selection
casts a weighted vote, and the votes received by a point provide a local measure
of how strongly it is supported by its neighbourhood. This support is then used
to determine which nearby links are visible during cluster formation.

For a fixed value of $K$, the method can be described intuitively as an
expansion process. Starting from a seed, RTKNNC follows visible links both
forward, to selected neighbours, and backward, to points that selected the
current point. Every newly reached point is explored in the same way, and the
process stops when no additional points can be added. We call this a
\emph{round trip}, and the final set a saturated round-trip closure. Clusters
therefore arise from local graph connectivity rather than from a prescribed
geometric shape. The number of clusters is not supplied to the algorithm.

The neighbourhood size $K$ also provides a natural notion of scale. Small
values describe very local structure, whereas larger values allow information
to propagate over broader neighbourhoods. Running the method independently for
increasing $K$ produces a sequence of cluster layers that makes persistent
groups and mergers directly visible. In this paper, \emph{hierarchical} refers
to this organisation across neighbourhood scales, not to a conventional
linkage-based agglomerative procedure. For the reference inverse-square voting
rule, before the optional structural refinement is applied, we show that these
layers have a useful merge-only property whenever they are proper partitions:
as $K$ increases, a cluster may persist or merge with another one, but it does
not split.

Any clustering method based on connectivity faces a familiar difficulty: a
small number of links can join groups that are otherwise well separated. A
sparse chain of points may act as a bridge, or two populations of different
densities may overlap in a narrow region. RTKNNC therefore includes an optional
structural-refinement stage that is applied only after a connected component
has already been formed. The purpose of this stage is not to replace the
round-trip method with a Gaussian-mixture clustering algorithm. Instead, a
Gaussian-mixture model is used locally as a test of whether the existing
component contains evidence for two or three internal populations. The proposed
subdivision is accepted only when the statistical evidence is strong, every
resulting group is large enough, and the groups remain coherent in the visible
KNN graph. Reference labels are never used to make this decision, and the KNN
adjacency lists themselves are not rewritten.

This separation is important. The round-trip graph determines how local
evidence propagates and therefore produces the original cluster candidates; the
optional refinement is a conservative correction for cases in which connectivity
alone has joined distinguishable populations. It also lets us study two
questions separately: how vote allocation changes the visible graph, and when a
connected component needs an additional structural check.

KNN graphs, shared-neighbour evidence, and reciprocal or reverse-neighbour
relations are established ingredients
\citep{JarvisPatrick1973,Ertoz2003SNN,BritoChavezQuirozYukich1997,ChowdhuryAmorim2018,XieLeeWangChenZhou2019}.
The contribution of RTKNNC is to combine them into a directed round-trip
process whose behaviour can be followed across neighbourhood scales. Recent
KNNG-CS work is related in its use of weighted incoming KNN links to define node
importance \citep{LiuEtAl2026KNNGCS}; there, however, the quantity supports
greedy coreset selection, whereas RTKNNC uses incoming votes to regulate graph
traversal and cluster formation.

The main contributions of this paper are:
\begin{enumerate}
\item a clustering procedure that retains the direction of KNN relations and
combines forward and reverse traversal with weighted incoming votes and local
visibility;
\item a multiscale construction in which cluster structure is followed as $K$
changes, together with a proof of merge-only coarsening for the unrefined
reference inverse-square model;
\item a controlled family of vote-allocation rules---uniform,
inverse-distance, inverse-square, and sigmoid---that allows their effects to be
compared under the same per-source vote budget;
\item an optional label-free refinement that tests already formed components for
supported internal subdivisions and accepts them only after size and graph
coherence checks; and
\item an experimental evaluation covering implementation agreement, vote-model
ablations, multiscale behaviour, structural-refinement cases, and comparison
with seven external clustering methods.
\end{enumerate}

The remainder of the paper is organised as follows. Section~\ref{sec:related}
reviews related neighbour- and graph-based clustering approaches.
Section~\ref{sec:preliminaries} introduces the notation needed to describe the
directed KNN graph and the round-trip relation. Section~\ref{sec:method} presents
the voting, visibility, expansion, refinement, and multiscale construction.
Section~\ref{sec:experimental_protocol} describes the datasets, implementation
checks, parameter sweeps, and comparison protocol. Section~\ref{sec:results}
reports the results, Section~\ref{sec:discussion} discusses the behaviour and
limitations of the method, and Section~\ref{sec:conclusion} summarises the main
findings.

\section{Related work}\label{sec:related}

This section situates RTKNNC within the main clustering approaches that share its nearest-neighbour and graph-based foundations. We review shared-neighbour methods, mutual, reciprocal, and reverse nearest-neighbour constructions, and density-based and spectral graph clustering, highlighting the distinctions between these approaches and the visibility-filtered round-trip closure proposed here.

\subsection{Shared-neighbour and Jarvis--Patrick clustering}

Jarvis and Patrick introduced a clustering criterion based on shared near neighbours \citep{JarvisPatrick1973}. Later shared-nearest-neighbour (SNN) clustering methods used common neighbour counts to improve robustness to clusters of different shapes, sizes, and densities \citep{Ertoz2003SNN}. RTKNNC is related to this family because points sharing an outgoing neighbour belong to that neighbour's reverse-neighbour set. Rather than using this evidence as a static similarity score, RTKNNC embeds it in a recursive graph-closure process.

\subsection{Mutual, reciprocal, and reverse nearest-neighbour methods}

Mutual KNN and reciprocal nearest-neighbour graphs have been used for clustering and outlier detection \citep{BritoChavezQuirozYukich1997,GowdaKrishna1978,SardanaBhatnagar2014}. Reverse nearest-neighbour methods use incoming KNN links as density or representativeness signals \citep{ChowdhuryAmorim2018}. RTKNNC uses reverse-neighbour sets both for backward traversal and to compute kernel-weighted popularity. Popularity then modulates the local visibility threshold for forward and backward transitions.

Recent work continues to develop neighbour relations as clustering primitives.
MGNR constructs task-adapted multi-granularity relations
\citep{XieEtAl2024MGNR}; weighted-KNN density peaks use a point's propensity to
occur in other observations' KNN sets as density evidence
\citep{DingEtAl2024WeightedKNN}; and KNEG-CL evolves an unweighted directed KNN
graph with $K$ and uses in- and out-degree patterns for density-peak detection
and subcluster merging \citep{FeiEtAl2025KNEGCL}. These approaches pursue
multi-granularity representation, density estimation, and degree-driven
merging, respectively, whereas RTKNNC builds visibility-filtered recursive
closures and studies their evolution as cluster layers.

KNNG-CS is particularly close in its definition of popularity \citep{LiuEtAl2026KNNGCS}. It assigns every outgoing KNN edge a source-normalised softmax weight and defines node importance as the sum of incoming weights. KNNG-CS computes this quantity once, greedily selects coreset representatives, and removes each selected point together with its reverse neighbours. RTKNNC instead uses distance-weighted incoming votes to modulate visibility during recursive forward--backward closure. KNNG-CS fixes each source's outgoing mass to one; RTKNNC retains the geometry-dependent reference budget
\[
B_K(x_i)=\sum_{x_j\in\calN_K^+(x_i)}1/r_{ij}^2.
\] This common budget recovers the inverse-square operator exactly and permits comparison of alternative allocation shapes.

\subsection{Density-based and spectral graph clustering}

DBSCAN, OPTICS, and HDBSCAN define clusters through density connectivity and hierarchical density estimates \citep{Ester1996DBSCAN,Ankerst1999OPTICS,Campello2013HDBSCAN,CampelloMoulaviZimekSander2015}. Spectral clustering constructs a similarity graph and uses eigenvectors of the associated graph Laplacian to reveal cluster structure \citep{NgJordanWeiss2002,VonLuxburg2007}. The method proposed here is graph-based, but it is combinatorial and closure-driven: it does not require eigenvalue decomposition and it exposes a sequence of partitions as the neighbourhood size $K$ varies.
The comparison also includes classical agglomerative clustering
\citep{MurtaghContreras2012}, $k$-means \citep{Lloyd1982,MacQueen1967}, and
Gaussian mixtures, using the scikit-learn implementations
\citep{PedregosaEtAl2011}.

\section{Preliminaries}\label{sec:preliminaries}

This section introduces the notation and graph-theoretic constructions used throughout the paper. We first state the round-trip mechanism for a general directed relation and then specialise it to directed KNN graphs and reverse neighbourhoods.

\subsection{Relational basis of the round-trip operator}\label{sec:relational_origin}

The primitive idea can be stated for an arbitrary directed binary relation on
a common finite set.  Let
\begin{equation}
X=\{x_1,\ldots,x_N\}
\end{equation}
be a finite set and let
\begin{equation}
T\subseteq X\times X
\end{equation}
be a directed binary relation.  Its symmetrised forward--backward, or
round-trip, relation is
\begin{equation}
R_T=T\cup T^{-1}\subseteq X\times X.
\end{equation}
For $x\in X$, the corresponding one-step round-trip neighbourhood is
\begin{equation}
R_T(x)=\{y\in X:(x,y)\in T\}\cup\{y\in X:(y,x)\in T\}.
\label{eq:relational_roundtrip}
\end{equation}
Thus, one step retrieves both the objects selected directly by $x$ and the
objects that select $x$.

For a subset $S\subseteq X$, define
\begin{equation}
R_T(S)=\bigcup_{x\in S}R_T(x).
\end{equation}
The cumulative round-trip closure starts from
\begin{equation}
C_T^0(S)=S,
\end{equation}
and evolves according to
\begin{equation}
C_T^{t+1}(S)=C_T^t(S)\cup R_T(C_T^t(S)),\qquad t\geq 0.
\label{eq:relational_iteration}
\end{equation}
The process saturates when
\begin{equation}
C_T^{s+1}(S)=C_T^s(S)
\end{equation}
for the first time.  The integer $s$ is the saturation depth.  Because $X$ is
finite and the sequence is monotone, saturation is guaranteed; the saturated
set is the union of the weakly connected components of $(X,T)$ that intersect
$S$.

This relational view exactly describes the forward--backward mechanism used by
the proposed graph operator.  The algorithmic construction below instantiates
$T$ as the visibility-filtered edge relation of a directed KNN graph.

\subsection{Directed KNN graphs and reverse neighbourhoods}\label{sec:knn_graph}

Let
\begin{equation}
X=\{x_1,\ldots,x_N\}\subset\R^d
\end{equation}
be a finite dataset with a metric $\delta:X\times X\to\R_{\geq 0}$.
Assume $N\geq2$ and $1\leq K\leq N-1$. The experiments use Euclidean
distance on the stated feature columns. For each point $x_i$, let
\begin{equation}
\calN_K^+(x_i)=\KNN_K(x_i)
\end{equation}
denote the set of its $K$ nearest proper neighbours, which excludes $x_i$ itself.
The directed KNN graph is
\begin{equation}
\calG_K=(X,E_K),
\end{equation}
where
\begin{equation}
(x_i,x_j)\in E_K \quad\Longleftrightarrow\quad x_j\in\calN_K^+(x_i).
\end{equation}
The reverse-neighbour set of $x_j$ is
\begin{equation}
\calN_K^-(x_j)=\RNN_K(x_j)=\{x_i\in X:x_j\in\calN_K^+(x_i)\}.
\label{eq:rnn}
\end{equation}

\begin{figure}[t]
\centering
\includegraphics[width=4.8in,height=6.8in,keepaspectratio]{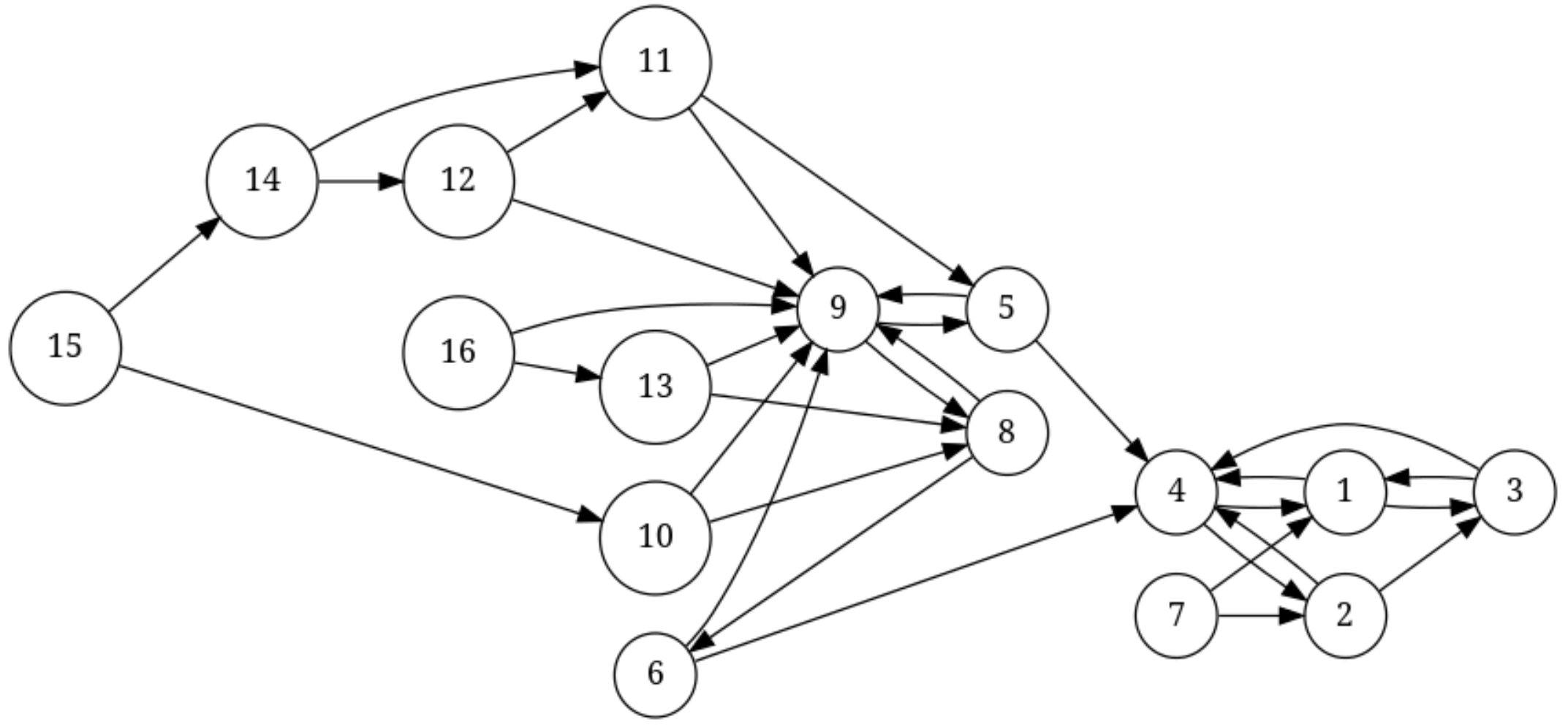}
\caption{Example of a directed KNN graph with $K=2$, used to illustrate the
proposed mechanism. Arrows represent directed nearest-neighbour choices. The
graph encodes neighbourhood connectivity, but its abstract layout does not
reproduce the spatial morphology of the original point distribution. Metric
distances are stored as edge attributes and are not represented by the drawn
edge lengths. The same graph supports reverse-neighbour retrieval and
visibility-filtered round-trip expansion.}\end{figure}

\section{Proposed algorithm}\label{sec:method}
\label{sec:proposed-algorithm}

This section develops RTKNNC from the directed KNN representation introduced above. We define the vote-kernel popularity and visibility rules, the resulting recursive round-trip closure, the optional structural-pruning stage, and the multiscale cluster-layer representation used in the experiments.

At a high level, for a fixed $K$, RTKNNC first builds a directed KNN graph.
Incoming neighbour selections define a popularity score, which adjusts a local
distance threshold and determines which KNN links may be traversed. Starting
from a seed, the algorithm repeatedly follows admissible links in both
directions until no new points are reached. Repeating this process for
increasing values of $K$ produces the multiscale cluster sequence. An optional
structural stage can then subdivide an overly connected component when
statistical and graph-based evidence support the split.

\subsection{Vote-kernel popularity, visibility, and round-trip closure}\label{sec:visibility_closure}

The reference formulation adopts a popularity-modulated, visibility-filtered round-trip expansion. A point need not propagate through all of its $K$ nearest-neighbour links: only links admitted by its local visibility threshold participate in the forward--backward closure. To reproduce the reference implementation exactly, linear and squared distances must be distinguished.

\subsubsection{Linear and squared-distance representation}

Let
\begin{equation}
r_{ij}=\delta(x_i,x_j),
\qquad
D_{ij}=r_{ij}^{2}.
\label{eq:linear_squared_distance}
\end{equation}
The C implementation stores $D_{ij}$, rather than $r_{ij}$, in each KNN list. After the list is sorted, its stored radius is the largest retained squared distance,
\begin{equation}
R_K^{(2)}(x_i)=D_{i,(K)}
=\max_{x_j\in\calN_K^+(x_i)}D_{ij}.
\label{eq:squared_knn_radius}
\end{equation}
The corresponding linear scale is
\begin{equation}
R_K(x_i)=\sqrt{R_K^{(2)}(x_i)}=r_{i,(K)}.
\label{eq:linear_knn_radius}
\end{equation}

\subsubsection{Vote kernels and reverse-neighbour popularity}

For each source point $x_i$, define the dimensionless neighbour coordinate
\begin{equation}
u_{ij}=\frac{r_{ij}}{R_K(x_i)},
\qquad x_j\in\calN_K^+(x_i),
\label{eq:normalised_neighbour_distance}
\end{equation}
and compare the four allocation shapes
\begin{equation}
h(u)\in
\left\{
1,\frac{1}{u},\frac{1}{u^2},S_\alpha(u)
\right\},
\label{eq:vote_kernel_family}
\end{equation}
where
\begin{equation}
S_\alpha(u)=\frac{1}{1+\exp[\alpha(u-1)]}.
\label{eq:sigmoid_vote}
\end{equation}
Here, $\alpha>0$ controls the slope, and $R_K(x_i)$ supplies the local distance scale for every shape.  Directly comparing the raw values $1$, $1/r$, $1/r^2$, and $S_\alpha(r_{ij}/R_K(x_i))$ would confound shape with units and arbitrary outgoing amplitude.  We therefore retain the reference inverse-square vote mass of source $i$ as the common budget
\begin{equation}
B_K(x_i)=\sum_{x_k\in\calN_K^+(x_i)}\frac{1}{r_{ik}^2},
\label{eq:reference_source_budget}
\end{equation}
and redistribute that budget according to the selected shape:
\begin{equation}
g_{i,h}(x_j)=B_K(x_i)
\frac{h(u_{ij})}
{\displaystyle\sum_{x_k\in\calN_K^+(x_i)}h(u_{ik})}.
\label{eq:budget_matched_vote}
\end{equation}
Thus
$\sum_{x_j\in\calN_K^+(x_i)}g_{i,h}(x_j)=B_K(x_i)$
for every model. For a fixed allocation shape $h$, let
$g_h=\{g_{i,h}\}_{i=1}^{N}$ denote the resulting
vote model; when the selected shape is clear, we write $g=g_h$.

The term \emph{vote kernel} is used here to denote a non-negative local
allocation rule; it is not a Mercer kernel, and neither symmetry nor
positive semidefiniteness is assumed.

The corresponding incoming popularity of $x_j$ is
\begin{equation}
P_{K,g}(x_j)=
\sum_{x_i:\,x_j\in\calN_K^+(x_i)}g_{i,h}(x_j).
\label{eq:reverse_popularity}
\end{equation}
Thus, $P_{K,g}(x_j)$ is a kernel-weighted in-degree: it accumulates the
votes received by $x_j$ along its incoming KNN edges.
For $h(u)=1/u^2$, Eq.~\eqref{eq:budget_matched_vote} reduces algebraically to the reference edge vote,
\begin{equation}
g_{i,\mathrm{inv}\_r2}(x_j)
=B_K(x_i)\frac{(R_K(x_i)/r_{ij})^2}
{\sum_{x_k\in\calN_K^+(x_i)}(R_K(x_i)/r_{ik})^2}
=\frac{1}{r_{ij}^2}=\frac{1}{D_{ij}}.
\label{eq:reference_inverse_square_vote}
\end{equation}
The implementation uses this final expression directly for the inverse-square branch, avoiding a normalise--denormalise round-off cycle. Exact duplicate observations are excluded because the shared reference budget is undefined at $r=0$.

\subsubsection{RMS visibility in squared-distance units}

The base, or unmodulated, visibility threshold depends only on the outgoing
KNN distances of $x_i$ and is the RMS of the stored squared distances,
\begin{equation}
V_{K,0}^{(2)}(x_i)=
\left(
\frac{1}{K}
\sum_{x_j\in\calN_K^+(x_i)}D_{ij}^{2}
\right)^{1/2}.
\label{eq:rms_visibility_base}
\end{equation}
Here, the subscript $0$ denotes the absence of popularity modulation, rather
than an additional vote model, and the superscript $(2)$ indicates
squared-distance units.
Popularity then produces the effective threshold
\begin{equation}
V_{K,g}^{(2)}(x_i)=V_{K,0}^{(2)}(x_i)
\left(1+\frac{P_{K,g}(x_i)R_K^{(2)}(x_i)}{N}\right).
\label{eq:popularity_modulated_visibility}
\end{equation}
The subscript $g$ records that this effective threshold uses the popularity
computed under vote model $g$.  Since $P_{K,g}(x_i)\geq0$, modulation cannot
reduce the base threshold.
Equations~\eqref{eq:reference_inverse_square_vote}--\eqref{eq:popularity_modulated_visibility} reproduce the reference inverse-square implementation exactly. The comparison models change only the within-source allocation in Eq.~\eqref{eq:budget_matched_vote}. Under a global coordinate rescaling $r\mapsto cr$, the dimensionless
allocation coordinates $u_{ij}$ and shapes $h(u_{ij})$ remain unchanged,
whereas all four budget-matched edge votes $g_{i,h}(x_j)$ scale as
$c^{-2}$ because $B_K\mapsto c^{-2}B_K$. Since
$R_K^{(2)}\mapsto c^2R_K^{(2)}$, the popularity factor
$P_{K,g}R_K^{(2)}$ is invariant for every model. Feature-wise preprocessing can still change the KNN graph and must therefore be specified and reused exactly.

\subsubsection{Visible forward and backward steps}

For a fixed vote model $g$, define the directed visible-edge set
\begin{equation}
E^{\mathrm{vis}}_{K,g}
=
\left\{(x_i,x_j)\in E_K:
D_{ij}\leq V_{K,g}^{(2)}(x_i)
\right\}.
\label{eq:visible_edge_set}
\end{equation}
The corresponding undirected edge set obtained by disregarding orientation is
\begin{equation}
\overline E^{\mathrm{vis}}_{K,g}
=
\left\{
\{x_i,x_j\}:
(x_i,x_j)\in E^{\mathrm{vis}}_{K,g}
\ \text{or}\
(x_j,x_i)\in E^{\mathrm{vis}}_{K,g}
\right\}.
\label{eq:symmetrised_visible_edge_set}
\end{equation}
For any $C\subseteq X$, let
\begin{equation}
\overline E^{\mathrm{vis}}_{K,g}[C]
=
\left\{
\{x_i,x_j\}\in\overline E^{\mathrm{vis}}_{K,g}:
x_i,x_j\in C
\right\}
\label{eq:induced_visible_edge_set}
\end{equation}
denote the visible undirected edge set induced by $C$.

For a fixed vote model $g$, we omit the subscript $g$ from the notation
below whenever there is no ambiguity. The visible forward neighbourhood of $x_i$ is
\begin{equation}
\calV_K^+(x_i)=
\left\{
x_j\in\calN_K^+(x_i):
D_{ij}\leq V_{K,g}^{(2)}(x_i)
\right\}.
\label{eq:visible_forward}
\end{equation}
The visible backward neighbourhood of $x_i$ is
\begin{equation}
\calV_K^-(x_i)=
\left\{
x_\ell\in\calN_K^-(x_i):
D_{\ell i}\leq V_{K,g}^{(2)}(x_\ell)
\right\}.
\label{eq:visible_backward}
\end{equation}
The backward condition is evaluated from the viewpoint of the reverse
neighbour $x_\ell$: a reverse-neighbour step is accepted only if
$D_{\ell i}$ does not exceed the squared-distance visibility threshold of
$x_\ell$. The one-step visibility-filtered round-trip neighbourhood is
therefore
\begin{equation}
\calR_K^{\mathrm{vis}}(x_i)
=
\calV_K^+(x_i)\cup\calV_K^-(x_i).
\label{eq:visible_roundtrip}
\end{equation}

\subsubsection{Recursive closure and singleton rejection}

Starting from a seed point $x_i$, the recursive closure is
\begin{equation}
C_K^0(x_i)=\{x_i\},
\end{equation}
\begin{equation}
C_K^{t+1}(x_i)
=
C_K^t(x_i)\cup
\bigcup_{x_q\in C_K^t(x_i)}
\calR_K^{\mathrm{vis}}(x_q).
\label{eq:visibility_closure}
\end{equation}
The process saturates when
\begin{equation}
C_K^{t+1}(x_i)=C_K^t(x_i).
\end{equation}
The saturated closure is denoted
\begin{equation}
C_K^\infty(x_i)=C_K^s(x_i),
\end{equation}
where $s$ is the first saturation time.

A saturated closure is accepted as a cluster only when it contains more than
one point:
\begin{equation}
|C_K^\infty(x_i)|>1.
\label{eq:singleton_rejection}
\end{equation}
This rule is retained as an implementation safeguard. Under the stated
RMS visibility rule, however, every point has at least one visible proper
neighbour. Indeed,
\begin{equation}
D_{i,(1)}\leq
\left(\frac{1}{K}\sum_{j=1}^{K}D_{i,(j)}^2\right)^{1/2}
=V_{K,0}^{(2)}(x_i)\leq V_{K,g}^{(2)}(x_i).
\label{eq:nonempty_visible_neighbourhood}
\end{equation}
Consequently, no saturated closure is a singleton in the mathematical
formulation. Together with Proposition~\ref{prop:closure_component}, this
implies that the distinct closures form a partition of $X$. The empirical
coverage and disjointness checks below therefore validate implementation
agreement with this property.

\begin{proposition}[Closure--component equivalence]
\label{prop:closure_component}
For fixed $K$ and vote model $g$, the unpruned saturated closure
$C_K^\infty(x_i)$ is the vertex set of the connected component containing
$x_i$ in the symmetrised visible graph
$(X,\overline E^{\mathrm{vis}}_{K,g})$.
\end{proposition}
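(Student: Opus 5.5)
The plan is to reduce the statement to the relational fact of Section~\ref{sec:relational_origin}. We instantiate the abstract relation as $T=E^{\mathrm{vis}}_{K,g}$ and show that the visible one-step neighbourhood coincides with the relational round-trip neighbourhood, that is, $\calR_K^{\mathrm{vis}}(x)=R_T(x)$ for every $x\in X$. The forward part is immediate. By Eq.~\eqref{eq:visible_forward}, $x_j\in\calV_K^+(x_i)$ holds iff $x_j\in\calN_K^+(x_i)$ and $D_{ij}\le V_{K,g}^{(2)}(x_i)$, which is exactly $(x_i,x_j)\in E^{\mathrm{vis}}_{K,g}$. For the backward part we use Eq.~\eqref{eq:rnn}: $x_\ell\in\calN_K^-(x_i)$ iff $(x_\ell,x_i)\in E_K$. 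The threshold in Eq.~\eqref{eq:visible_backward} is that of the source $x_\ell$, matching Eq.~\eqref{eq:visible_edge_set}, so $x_\ell\in\calV_K^-(x_i)$ iff $(x_\ell,x_i)\in T$. Hence $\calR_K^{\mathrm{vis}}(x_i)=\{y:(x_i,y)\in T\}\cup\{y:(y,x_i)\in T\}$. Combined with Eq.~\eqref{eq:symmetrised_visible_edge_set}, this says $y\in\calR_K^{\mathrm{vis}}(x)$ iff $\{x,y\}\in\overline E^{\mathrm{vis}}_{K,g}$, i.e.\ iff $y$ is adjacent to $x$ in the symmetrised visible graph. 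Note that $T$ has no loops, since KNN lists exclude the point itself.

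With this identification, Eq.~\eqref{eq:visibility_closure} is literally Eq.~\eqref{eq:relational_iteration} with $S=\{x_i\}$. To keep the argument self-contained rather than rely on the saturation remark, I would prove the component characterisation directly. Let $\mathrm{dist}(x_i,y)$ denote graph distance in $(X,\overline E^{\mathrm{vis}}_{K,g})$. The claim is that, by induction on $t$,
\[
C_K^t(x_i)=\{y\in X:\mathrm{dist}(x_i,y)\le t\},
\]
i.e.\ the $t$-th iterate is the ball of radius $t$. The base case $t=0$ is trivial. For the step, the union of the ball of radius $t$ with the set of all neighbours of its points is the ball of radius $t+1$. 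This holds because a shortest path of length $t+1$ passes through a vertex at distance $t$, and conversely any neighbour of a vertex at distance at most $t$ lies at distance at most $t+1$.

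It remains to pass from the iterates to the saturated set. The sequence $C_K^t(x_i)$ is monotone and bounded by the finite set $X$, so it stabilises at the first saturation time $s$. Once $C_K^{s+1}=C_K^s$, the balls of radius $s$ and $s+1$ coincide, so no vertex lies at distance exactly $s+1$. A vertex at distance $m>s+1$ would force one at distance $s+1$ on its shortest path, so no such vertex exists either. Therefore $C_K^\infty(x_i)$ equals the set of vertices at finite distance from $x_i$, which is the vertex set of its connected component.

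The only step needing real care is the backward identification. One must check that the backward step uses the visibility threshold of the reverse neighbour $x_\ell$, not of $x_i$, so that backward moves traverse exactly the reversed visible edges. If the threshold were taken from $x_i$, the relation would no longer be $T\cup T^{-1}$, and the correspondence with the symmetrised graph would fail. The rest is standard BFS reasoning.
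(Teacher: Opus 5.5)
Your proof is correct and takes the same route as the paper. The paper also identifies the one-step visible round-trip neighbourhood with adjacency in the symmetrised visible graph, using the fact that backward steps use the threshold of the reverse neighbour $x_\ell$, and then concludes that iterating to saturation yields the connected component; your induction showing that $C_K^t(x_i)$ is the ball of radius $t$ spells out the breadth-first-search step that the paper states in one sentence.
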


\begin{proof}
A forward step from $x_q$ reaches exactly the endpoints of visible directed
edges leaving $x_q$, whereas a backward step reaches exactly the initial
vertices of visible directed edges entering $x_q$. Their union is therefore
the set of vertices adjacent to $x_q$ in the graph obtained by disregarding
the orientation of $E^{\mathrm{vis}}_{K,g}$.
Equation~\eqref{eq:visibility_closure} repeatedly adds all such adjacent
vertices. Iteration to saturation consequently reaches exactly the connected
component containing the seed $x_i$.
\end{proof}

\begin{algorithm}[t]
\caption{Visibility-filtered round-trip closure for one point}
\label{alg:visibility_closure}
\begin{algorithmic}[1]
\Require directed KNN graph $\calG_K$, vote model $g$,
visibility thresholds $V_{K,g}^{(2)}$, point $x_i$
\State $C \gets \{x_i\}$
\Repeat
    \State $C_{\mathrm{old}}\gets C$
    \ForAll{$x_q\in C_{\mathrm{old}}$}
        \State $C\gets C\cup\calV_K^+(x_q)$
        \State $C\gets C\cup\calV_K^-(x_q)$
    \EndFor
\Until{$C=C_{\mathrm{old}}$}
\If{$|C|>1$}
    \State \Return $C$
\Else
    \State \Return rejected singleton closure
\EndIf
\end{algorithmic}
\end{algorithm}

\subsubsection{Optional structural pruning of saturated closures}
\label{sec:structural_pruning}

Before structural refinement, one visible path is sufficient to place its
endpoints in the same saturated closure. Structural pruning is an optional second stage,
enabled in the native executable by \texttt{--structural-pruning}, that tests
whether a large unpruned closure contains strongly supported internal
populations. It is disabled by default, so the reference inverse-square
regression and the four-kernel ablation remain unchanged. The native selector
\texttt{--vote-kernel} accepts \texttt{uniform}, \texttt{inv\_r},
\texttt{inv\_r2}, or \texttt{sigmoid}; \texttt{inv\_r2} remains the default,
and the sigmoid slope is controlled by \texttt{--sigmoid-alpha} (default $8$).
The selected kernel supplies both the popularity-modulated visible graph and
the popularity-ranked first mixture seed, while every subsequent pruning
criterion is held fixed.

Despite the term pruning, this stage does not rank or delete individual KNN
edges. It proposes a subdivision of an already formed connected component and
accepts that subdivision only when the statistical, size, and graph-coherence
criteria agree. The KNN adjacency lists themselves remain unchanged.

Let $C$ be an unpruned saturated closure. It is considered for pruning only
when
\begin{equation}
|C|\geq n_{\mathrm{parent}}
=
\max\{50,\lceil0.15N\rceil\}.
\label{eq:pruning_parent_size}
\end{equation}
Each coordinate is standardised within $C$, and diagonal-covariance Gaussian
mixtures with $m\in\{1,2,3,4\}$ components are fitted by
expectation--maximisation \citep{DempsterLairdRubin1977}. For dimension $d$,
the implementation compares
\begin{equation}
\operatorname{BIC}_m
=
-2\ell_m+p_m\log |C|,
\qquad
p_m=m(2d+1)-1,
\label{eq:pruning_bic}
\end{equation}
where $\ell_m$ is the highest fitted log likelihood among the deterministic
restarts and $p_m$ counts mixture
weights, means, and diagonal variances \citep{Schwarz1978}. Models with
$m>1$ use five deterministic restarts. In the first restart, the initial
centre is the point of greatest reverse-neighbour popularity and subsequent
centres are chosen by farthest-first traversal. The remaining restarts use
deterministic seeds, making repeated executions reproducible. Each restart
uses at most 200 EM iterations and stops when the change in total log
likelihood satisfies
$|\ell^{(t)}-\ell^{(t-1)}|\leq10^{-6}(1+|\ell^{(t)}|)$.
The variance updates add $10^{-6}$ in the standardised coordinates. These
settings define a numerical fit; a global likelihood maximum is not assumed.

Let $m^*$ denote the number of Gaussian-mixture components selected by BIC,
that is,
\begin{equation}
m^*=\argmin_m\operatorname{BIC}_m.
\end{equation}
A candidate is retained only if
\begin{equation}
m^*\in\{2,3\},
\qquad
\operatorname{BIC}_1-\operatorname{BIC}_{m^*}\geq10,
\label{eq:pruning_acceptance}
\end{equation}
and every proposed child contains at least
\begin{equation}
n_{\mathrm{child}}
=
\max\{20,\lceil0.05|C|\rceil\}
\label{eq:pruning_child_size}
\end{equation}
points. Thus, a four-component optimum is a conservative veto rather than an
instruction to subdivide a curved or structurally complex closure.

Mixture membership alone does not determine the output. Each proposed child
is decomposed into connected components in its induced symmetrised visible
subgraph. The largest component is retained as the principal component; hereafter,
we refer to any other connected component of that child as a \emph{fragment}.
Any fragment containing fewer than $n_{\mathrm{child}}$ points is reassigned
to whichever other proposed child has the greatest visible-arc support from
that fragment. Support counts incoming and outgoing visible directed arcs;
a reciprocal pair contributes twice. Ties prefer the more populous child,
then the smaller child index. The native implementation updates assignments
sequentially and recomputes the induced components over at most four passes.
Acceptance requires a pass with no reassignment, so every final child is
connected. The proposal is rejected if a non-principal component contains
at least $n_{\mathrm{child}}$ points, a child becomes empty, no unchanged pass
is reached within the limit, or a child violates the final size constraint.

For a candidate labelling $z_C$ that satisfies the mixture, size, and
graph-coherence criteria, define the crossing set
\begin{equation}
E_C^{\mathrm{cut}}
=
\left\{
\{x_i,x_j\}\in\overline E_{K,g}^{\mathrm{vis}}[C]:
z_C(x_i)\neq z_C(x_j)
\right\}.
\label{eq:pruning_cut}
\end{equation}
The subdivision is accepted only when
\begin{equation}
|E_C^{\mathrm{cut}}|>0.
\end{equation}
For a connected parent and a nontrivial subdivision, this positivity
condition follows automatically; it is an implementation consistency check,
not an additional measure of separation. Its regularised children then
replace $C$ in the final layer. The native
implementation does not modify the KNN adjacency lists; instead, it rebuilds
the cluster layer from the regularised child assignments. At the level of the
resulting partition, this is equivalent to treating the crossing connections
as cut. Reference labels play no role in proposal generation, model selection,
regularisation, or final reassignment.

\subsection{Cluster layers and multiscale evolution}
\label{sec:layers}

For a fixed $K$, the distinct accepted saturated closures define the
unpruned cluster layer
\begin{equation}
\calP_K=
\{C_1^{(K)},\ldots,C_{m_K}^{(K)}\},
\end{equation}
where $m_K$ is the number of distinct accepted closures at scale $K$.
Closures obtained from different seeds but corresponding to the same
symmetrised visible connected component are therefore represented only once.

When structural pruning is enabled, regularised accepted children replace
their parent closures in the final pruned layer. Unless stated otherwise,
the multiscale construction below refers to the unpruned layers. The
monotone-coarsening result does not extend automatically to independently
pruned layers, for which no nesting guarantee is claimed.

By varying $K$ over a prescribed set of scales
\begin{equation}
K\in
\{K_{\min},K_{\min}+1,\ldots,K_{\max}\},
\end{equation}
one obtains the unpruned multiscale sequence
\begin{equation}
\calP_{K_{\min}},
\calP_{K_{\min}+1},
\ldots,
\calP_{K_{\max}}.
\end{equation}

Each reported layer is computed independently for its own value of $K$, so
the KNN graph, popularity, and visibility are all defined at the same
neighbourhood scale. The implementation convention used to enforce this in the
experiments is described in Sect.~\ref{sec:experimental_protocol}.

\begin{proposition}[Monotone coarsening under reference inverse-square voting]
\label{prop:inverse_square_coarsening}
Assume $r_{ij}>0$ for $i\neq j$ and a fixed deterministic tie-breaking rule
such that
\begin{equation}
\calN_K^+(x_i)\subseteq\calN_{K+1}^+(x_i)
\label{eq:nested_knn}
\end{equation}
for every $x_i$. Under the unpruned reference inverse-square vote, write
\begin{equation}
E_K^{\mathrm{vis}}
=
E_{K,\mathrm{inv\_r2}}^{\mathrm{vis}},
\qquad
P_K
=
P_{K,\mathrm{inv\_r2}},
\qquad
V_K^{(2)}
=
V_{K,\mathrm{inv\_r2}}^{(2)}.
\label{eq:inverse_square_shorthand}
\end{equation}
Then
\begin{equation}
E_K^{\mathrm{vis}}
\subseteq
E_{K+1}^{\mathrm{vis}}.
\label{eq:nested_visible_edges}
\end{equation}
Consequently, since the unpruned saturated closures form proper partitions of \(X\),
\begin{equation}
\calP_K\preceq\calP_{K+1},
\label{eq:partition_refinement}
\end{equation}
where $\preceq$ denotes partition refinement in the sense that every cluster
of $\calP_K$ is contained in a unique cluster of $\calP_{K+1}$. Thus,
increasing $K$ can preserve or merge clusters but cannot split them.
\end{proposition}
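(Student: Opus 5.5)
The plan is to show that every ingredient of the effective threshold $V_K^{(2)}(x_i)$ is nondecreasing in $K$. Combined with the nesting assumption \eqref{eq:nested_knn}, this gives \eqref{eq:nested_visible_edges} at once. The partition statement then follows from Proposition~\ref{prop:closure_component}. Throughout, fix $x_i$ and use the sorted order $D_{i,(1)}\le\cdots\le D_{i,(N-1)}$ induced by the fixed tie-breaking rule. Under this rule, \eqref{eq:nested_knn} says that $\calN_{K+1}^+(x_i)$ is $\calN_K^+(x_i)$ plus one new point whose squared distance is $a=D_{i,(K+1)}\ge D_{i,(K)}$.

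\emph{Step 1 (the base threshold and radius).} Since $R_K^{(2)}(x_i)=D_{i,(K)}$, monotonicity of $R_K^{(2)}$ in $K$ is immediate. For $V_{K,0}^{(2)}$, let $m_K=\frac1K\sum_{j\le K}D_{i,(j)}^2$. Then
\[
m_{K+1}=\frac{Km_K+a^2}{K+1},
\]
and $a^2\ge D_{i,(K)}^2\ge m_K$ because a mean cannot exceed its largest term. Hence $m_{K+1}\ge m_K$, and so $V_{K+1,0}^{(2)}(x_i)\ge V_{K,0}^{(2)}(x_i)$.

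\emph{Step 2 (popularity).} This is the step where the inverse-square choice is essential, and I expect it to be the main obstacle. By \eqref{eq:reference_inverse_square_vote}, the edge vote is $g(x_\ell\to x_i)=1/D_{\ell i}$. This vote depends only on the pair of points and not on $K$; under the budget-matched rule \eqref{eq:budget_matched_vote} for other shapes $h$, the normalisation would change with $K$ and monotonicity could fail. Nesting of forward sets gives nesting of reverse sets, $\calN_K^-(x_i)\subseteq\calN_{K+1}^-(x_i)$. So $P_{K+1}(x_i)$ is $P_K(x_i)$ plus additional strictly positive terms, which are finite because $r_{ij}>0$. Hence $P_K(x_i)$ is nondecreasing in $K$.

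\emph{Step 3 (threshold and edges).} The factor $1+P_K(x_i)R_K^{(2)}(x_i)/N$ is a product-based expression in nonnegative nondecreasing quantities, so it is nondecreasing. Multiplying it by the nonnegative nondecreasing $V_{K,0}^{(2)}$ shows that $V_K^{(2)}(x_i)\le V_{K+1}^{(2)}(x_i)$. Now take $(x_i,x_j)\in E_K^{\mathrm{vis}}$. Then $x_j\in\calN_K^+(x_i)\subseteq\calN_{K+1}^+(x_i)$ and $D_{ij}\le V_K^{(2)}(x_i)\le V_{K+1}^{(2)}(x_i)$, so $(x_i,x_j)\in E_{K+1}^{\mathrm{vis}}$. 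This proves \eqref{eq:nested_visible_edges}.

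\emph{Step 4 (partitions).} Nesting of the directed visible edges implies $\overline E^{\mathrm{vis}}_K\subseteq\overline E^{\mathrm{vis}}_{K+1}$. By Proposition~\ref{prop:closure_component}, each cluster of $\calP_K$ is a connected component of $(X,\overline E^{\mathrm{vis}}_K)$. That component remains connected in the larger graph, so it lies in exactly one component of $(X,\overline E^{\mathrm{vis}}_{K+1})$, and uniqueness holds because those components are disjoint. By \eqref{eq:nonempty_visible_neighbourhood} no closure is a singleton, so these components are exactly the clusters of $\calP_{K+1}$, and \eqref{eq:partition_refinement} follows.
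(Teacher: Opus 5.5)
Your proposal is correct and follows the paper's proof essentially step for step. You get monotonicity of $R_K^{(2)}$ and of the RMS base threshold from the running-mean recursion, use the $K$-independence of the inverse-square edge vote $1/D_{\ell i}$ together with nesting of reverse-neighbour sets to get monotone popularity, and hence monotone $V_K^{(2)}$ and nested visible edges; you then pass from edge inclusion to partition refinement via Proposition~\ref{prop:closure_component}, and your Step~4 is only slightly more explicit than the paper in justifying uniqueness of the containing cluster and in using Eq.~\eqref{eq:nonempty_visible_neighbourhood} to identify the components with the clusters of $\calP_{K+1}$.
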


\begin{proof}
For a fixed source $x_i$, order its squared neighbour distances as
\[
D_{i,(1)}
\leq
\cdots
\leq
D_{i,(K)}
\leq
D_{i,(K+1)}.
\]
It follows immediately from Eq.~\eqref{eq:squared_knn_radius} that
\[
R_{K+1}^{(2)}(x_i)
\geq
R_K^{(2)}(x_i).
\]
Moreover,
\[
\left[V_{K+1,0}^{(2)}(x_i)\right]^2
=
\frac{
K\left[V_{K,0}^{(2)}(x_i)\right]^2
+
D_{i,(K+1)}^2
}{K+1}.
\]
Since
\[
D_{i,(K+1)}^2
\geq
\frac{1}{K}
\sum_{j=1}^{K}D_{i,(j)}^2
=
\left[V_{K,0}^{(2)}(x_i)\right]^2,
\]
we obtain
\[
V_{K+1,0}^{(2)}(x_i)
\geq
V_{K,0}^{(2)}(x_i).
\]

Under reference inverse-square voting, a retained directed edge
$x_\ell\to x_i$ contributes
\[
\frac{1}{D_{\ell i}}
\]
to the popularity of $x_i$, independently of $K$. The nesting of the
forward KNN sets in Eq.~\eqref{eq:nested_knn} implies
\[
\calN_K^-(x_i)
\subseteq
\calN_{K+1}^-(x_i),
\]
and therefore
\[
P_{K+1}(x_i)\geq P_K(x_i).
\]
Both $P_K(x_i)$ and $R_K^{(2)}(x_i)$ are non-negative and non-decreasing
with $K$, so their product is also non-decreasing. Together with the
monotonicity of $V_{K,0}^{(2)}(x_i)$,
Eq.~\eqref{eq:popularity_modulated_visibility} gives
\[
V_{K+1}^{(2)}(x_i)
\geq
V_K^{(2)}(x_i).
\]

Now let
\[
(x_i,x_j)\in E_K^{\mathrm{vis}}.
\]
Then
\[
x_j\in\calN_K^+(x_i)
\subseteq
\calN_{K+1}^+(x_i)
\]
and
\[
D_{ij}
\leq
V_K^{(2)}(x_i)
\leq
V_{K+1}^{(2)}(x_i).
\]
Therefore
\[
(x_i,x_j)\in E_{K+1}^{\mathrm{vis}},
\]
which proves
\[
E_K^{\mathrm{vis}}
\subseteq
E_{K+1}^{\mathrm{vis}}.
\]

By Proposition~\ref{prop:closure_component}, the unpruned saturated closures
are the connected components of the symmetrised visible graph. Adding edges
can merge connected components but cannot split them, which proves the stated
refinement relation between successive proper partitions.
\end{proof}

\subsubsection{Inter-layer overlap and empirical nesting}

The overlap between clusters in consecutive layers is
\begin{equation}
W_{ab}^{(K,K')}
=
|C_a^{(K)}\cap C_b^{(K')}|.
\label{eq:overlap_count}
\end{equation}
Useful normalisations include
\begin{equation}
F_{ab}^{(K,K')}
=
\frac{
|C_a^{(K)}\cap C_b^{(K')}|
}{
|C_a^{(K)}|
},
\label{eq:frac_from}
\end{equation}
and the Jaccard index
\begin{equation}
J_{ab}^{(K,K')}
=
\frac{
|C_a^{(K)}\cap C_b^{(K')}|
}{
|C_a^{(K)}\cup C_b^{(K')}|
}.
\label{eq:jaccard}
\end{equation}

The overlap quantities above are general enough to detect preservation,
absorption, merging, or splitting between consecutive layers. In all
reported unpruned $K$-layer sequences, no partial split transition was
observed. Every cluster at scale $K$ was
either preserved or wholly contained in a unique cluster at scale $K+1$.
Thus, the observed sequences formed successive merge-only coarsenings.

For the reference inverse-square model, this nesting follows from
Proposition~\ref{prop:inverse_square_coarsening} whenever the reported layers
are proper partitions. For the other vote kernels, the same merge-only
behaviour is an empirical observation rather than a guaranteed property.

This inter-layer correspondence is distinct from intra-layer overlap, which
would indicate that a given layer is not a proper partition.

\subsubsection{Saturation scale}

The $K$-layer construction also provides a label-free description of
partition stability. We denote by $K_{\mathrm{sat}}$ the onset of a
persistent layer over a prescribed stability window. This quantity
identifies a stable connectivity regime; it does not by itself imply
agreement with an external reference partition.

Operationally, saturation can be detected by monitoring the overlap between
consecutive layers. If each cluster in $\calP_K$ maps entirely to a unique
cluster in $\calP_{K+1}$, and vice versa, the two layers are equivalent.
More robustly, one may require the same equivalence to persist over a short
stability window,
\begin{equation}
\calP_K
\equiv
\calP_{K+1}
\equiv
\cdots
\equiv
\calP_{K+h},
\label{eq:partition_stability_window}
\end{equation}
where $h\geq1$ is a prescribed integer persistence parameter and
$K+h\leq K_{\max}$. The saturation scale can
then be defined as
\begin{equation}
K_{\mathrm{sat}}
=
\min
\left\{
K:
\calP_K
\equiv
\calP_{K+1}
\equiv
\cdots
\equiv
\calP_{K+h}
\right\}.
\label{eq:ksat}
\end{equation}
The minimum is taken over tested scales satisfying
$K_{\min}\leq K\leq K_{\max}-h$; if none satisfies the condition,
$K_{\mathrm{sat}}$ is undefined for that window and tested range. A finite
plateau does not preclude a later merger. Here, $\equiv$ denotes equality
of cluster layers up to relabelling of cluster indices.

% \subsubsection{Inter-layer overlap and empirical nesting}

\subsection{Implementation agreement and partition diagnostic}\label{sec:partition_diagnostic}

An important consistency check is whether each accepted layer $\calP_K$ is a true partition of the dataset. For each $K$, this can be verified by comparing the sum of accepted cluster cardinalities with the cardinality of the union of all accepted cluster members:
\begin{equation}
\sum_{a=1}^{m_K}|C_a^{(K)}|=\left|\bigcup_{a=1}^{m_K}C_a^{(K)}\right|.
\label{eq:partition_check}
\end{equation}
When the equality in Eq.~\eqref{eq:partition_check} holds and the union contains all $N$ data points, the accepted layer is not merely a cover of the dataset, but a proper partition:
\begin{equation}
\bigcup_{a=1}^{m_K} C_a^{(K)}=X,
\qquad
C_a^{(K)}\cap C_b^{(K)}=\varnothing
\quad\text{for }a\neq b.
\end{equation}
This property is important because the multiscale object produced by the algorithm is then a sequence of valid partitions, rather than a sequence of overlapping covers. Inter-layer overlap is still expected and informative, but intra-layer overlap would indicate an invalid layer representation. If singleton closures are rejected, the diagnostic also verifies whether any points were left outside the accepted layer.

If the left-hand side of Eq.~\eqref{eq:partition_check} is larger than the right-hand side, at least one point belongs to more than one cluster in the same layer.

Before comparing vote kernels, we verified the Python implementation against
the unpruned inverse-square C implementation using identical feature columns,
eight datasets, and independent runs at $K=2,\ldots,16$ with $K_0=K$.
Across all 120 runs, the C and Python implementations assigned every point
to the same cluster, up to permutation of cluster identifiers, including the
unclustered masks. Six additional C--Python comparisons covered the three
weighted kernels on the variable-density blobs at $K=8$ and the bridge at
$K=6$; these unpruned partitions also assigned every point identically up to
cluster-label permutation. Together, these tests establish implementation
agreement under the experimental protocol used in this study.

Table~\ref{tab:partition_diagnostic} reports a representative subset for the variable-density blobs dataset. Since all $N=2000$ points are represented exactly once and the unclustered count is zero, these layers are proper partitions. ARI and NMI compare each partition with the known reference labels and are invariant under a permutation of cluster identifiers.

\begin{table}[t]
\caption{Unpruned inverse-square regression results for the variable-density blobs dataset. Each $K$ was run independently with $K_0=K$.}\label{tab:partition_diagnostic}
\centering
\begin{tabular}{rrrrr}
\toprule
$K$ & clusters & ARI & NMI & unclustered\\
\midrule
2 & 595 & 0.0116 & 0.3991 & 0\\
3 & 277 & 0.0403 & 0.4491 & 0\\
4 & 99  & 0.2809 & 0.5716 & 0\\
5 & 18  & 0.6661 & 0.8017 & 0\\
6 & 6   & 0.7774 & 0.8941 & 0\\
7 & 5   & 0.7805 & 0.8981 & 0\\
8 & 4   & 0.7817 & 0.9035 & 0\\
\bottomrule
\end{tabular}
\end{table}

This diagnostic is different from inter-layer overlap. Equation~
\eqref{eq:partition_check} tests whether a single reported layer is a proper
partition, whereas the inter-layer quantities in
Eqs.~\eqref{eq:overlap_count}--\eqref{eq:jaccard} describe how clusters
correspond across neighbourhood scales.

For the reference inverse-square model, the merge-only character of
successive proper unpruned layers follows from
Proposition~\ref{prop:inverse_square_coarsening}.
Figure~\ref{fig:multiscale_transition} illustrates this structure on the
variable-density blobs. In the displayed $K=5,6,7,8$ sequence, every cluster
at one scale is wholly contained in a unique cluster at the next scale, so
the transition forest contains preservation and merger events but no split.

The same pointwise audit shows that the unpruned partition at $K=8$ remains
unchanged through $K=16$. Nevertheless, internal stability is not equivalent
to external correctness. At $K=7$, the cluster count already equals the five
reference classes, but ARI is only $0.7805$; at the tested-range saturation
beginning at $K=8$, four components remain and ARI is $0.7817$. The structural
pruning experiment in Sect.~\ref{sec:structural_pruning_results} addresses
this specific stable merger without using the reference labels at execution
time.

\begin{figure}[t]
\centering
\includegraphics[width=4.8in,height=6.8in,keepaspectratio]{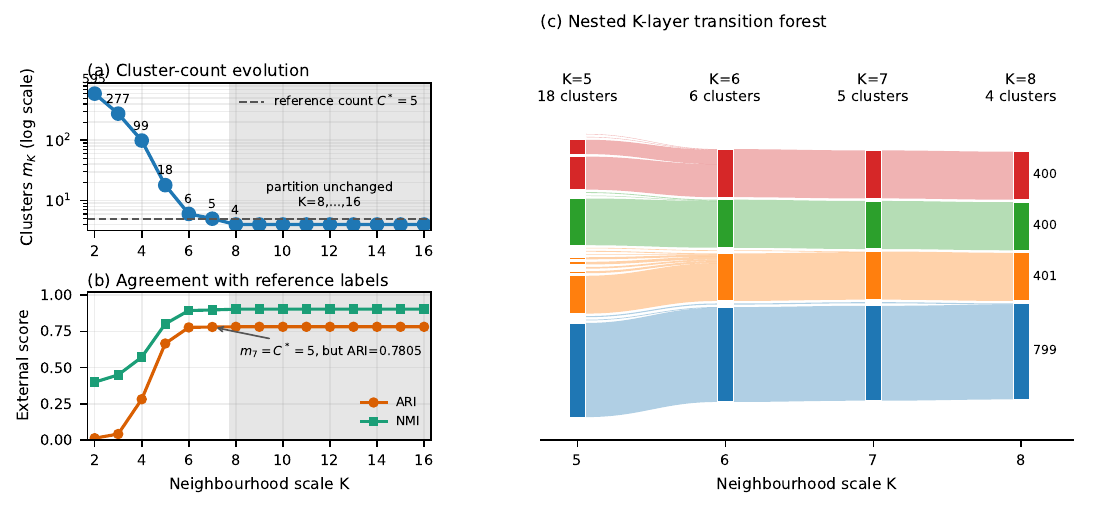}
\caption{Multiscale evolution of the unpruned reference inverse-square model
on the variable-density blobs.  Every layer is generated independently with
$K_0=K$.  (a) The cluster count decreases from 595 microclusters at $K=2$ to
four components at $K=8$ and is plotted on a logarithmic scale; the dashed
line is the reference count $C^*=5$, used only for evaluation.  (b) ARI and
NMI against the withheld reference labels.  The shaded interval marks the
exactly unchanged partition from $K=8$ through the largest tested value,
$K=16$.  (c) The merge-only transition forest over $K=5,6,7,8$.  Node and
ribbon thicknesses encode point population and overlap count
$W_{ab}^{(K,K')}$, respectively.  Colours identify the four lineages ending
at $K=8$ and do not encode reference labels.  The equality $m_7=C^*$ without
high ARI and the stable $m_K=4$ plateau demonstrate that cluster-count or
partition stability alone does not guarantee recovery of latent classes.}
\label{fig:multiscale_transition}
\end{figure}

\subsection{Scale, popularity, visibility, and saturation}\label{sec:scale}

The neighbourhood size $K$ controls the propagation scale of RTKNNC, while
the reverse-neighbour popularity $P_{K,g}(x_i)$ and the local visibility
threshold $V^{(2)}_{K,g}(x_i)$ regulate which KNN links are admissible.
Small values of $K$ favour local selectivity but may fragment the graph,
whereas larger values increase propagation. Popularity-modulated visibility
prevents this increase from making all KNN links equally admissible.

The base RMS threshold in Eq.~\eqref{eq:rms_visibility_base} provides a local
distance scale, while Eq.~\eqref{eq:popularity_modulated_visibility} enlarges
it according to incoming popularity. The vote kernel therefore changes how
incoming neighbour evidence affects visibility without changing the round-trip
closure operator itself. Under Eq.~\eqref{eq:nonempty_visible_neighbourhood},
singleton rejection is only an implementation safeguard and does not act as
an outlier filter. Across increasing $K$, the resulting layers may evolve from
local microfragmentation to stable larger-scale partitions; $K_{\mathrm{sat}}$
summarises the onset of such persistence.

\section{Experimental protocol}\label{sec:experimental_protocol}

The controlled benchmark reuses the eight synthetic datasets already distributed with the implementation:
\begin{itemize}
    \item Gaussian blobs with equal and variable densities;
    \item two moons and concentric circles;
    \item anisotropic Gaussian clusters;
    \item a uniform point cloud;
    \item two dense clusters connected by a sparse bridge;
    \item a Swiss-roll embedding.
\end{itemize}
The label column is used only for evaluation and is never included among the clustering features. The Swiss-roll labels are an artificial discretisation of a continuous manifold and its external scores are therefore interpreted descriptively. ARI and NMI are not reported for the uniform dataset because it has no nontrivial reference partition.

For every dataset, $K$ ranges from 2 through 16. Each $K$ is run independently with $K_0=K$. The KNN graph is constructed once for a dataset--$K$ pair and is reused unchanged by all four vote kernels. All models share the inverse-square source budget in Eq.~\eqref{eq:reference_source_budget} and therefore require strictly positive neighbour distances, so the input is checked for exact duplicate points before execution. Uniform, inverse-distance, inverse-square, and sigmoid refer to the four allocation shapes in Eq.~\eqref{eq:vote_kernel_family}; the sigmoid uses $\alpha=8$. RMS visibility and singleton rejection are enabled for every model. Structural pruning is disabled throughout this 840-run comparison, preserving a controlled ablation in which only the within-source allocation changes.

The independent-run convention is important for reproducibility. In the raw
interface, a run with $K_0<K_{\max}$ allocates the neighbour list at
$K_{\max}$ before the internal loop, so a construction-time radius may remain
tied to $K_{\max}$ at smaller intermediate layers. Running each reported scale
with $K_0=K$ ensures that the graph, radius, popularity, and visibility all
refer to the same stated neighbourhood scale.

The bridge benchmark varies the retained fraction of the existing labelled bridge points over $\{0,0.10,0.25,0.50,0.75,1\}$. Core points are never subsampled. The two dense cores are considered merged when their dominant predicted cluster identifier is the same; this criterion is independent of the numerical names assigned to clusters.

The output is one long-format CSV. For each dataset, $K$, and vote model, it records:
\begin{itemize}
    \item the dataset name, $K$, vote model, and complete model parameters;
    \item the number of clusters and unclustered points;
    \item ARI and NMI when a nontrivial reference partition is available \citep{HubertArabie1985,VinhEppsBailey2010};
    \item the mean number of visible neighbours and mean visibility in squared-distance units;
    \item end-to-end runtime, including KNN-graph construction;
    \item the bridge-merger indicator for the parameterised bridge rows.
\end{itemize}
The same measured graph-construction time is included in each of the four model runtimes for a given dataset--$K$ pair. This preserves end-to-end interpretation while ensuring that differences in graph construction cannot favour one vote kernel.

Structural pruning is evaluated separately in the native C executable under
the reference inverse-square vote.  Each case is run twice with identical
coordinates and isolated $K$: once with the default unpruned closure and once
with \texttt{--structural-pruning}.  The selected cases are variable-density
blobs at $K=8$, the complete sparse bridge at $K\in\{6,10\}$, and three
preservation controls---circles at $K=7$, two moons at $K=10$, and anisotropic
blobs at $K=8$.  The fixed implementation constants are one through four
diagonal mixture components, five restarts for $m>1$, a minimum BIC gain of
10, the parent and child sizes in Eqs.~\eqref{eq:pruning_parent_size} and
\eqref{eq:pruning_child_size}, and a four-component veto.  The executable is
passed \texttt{--dim 2}; the label column is read only afterwards to compute
ARI and NMI.  Complete native logs, layer files, input hashes, and a
machine-readable parameter manifest accompany the figures.

To test whether the two principal corrections depend on the reference allocation shape,
the variable-density blobs at $K=8$ and the complete bridge at $K=6$ are also
run natively with the three budget-matched shapes $1/u$, $1/u^2$, and
$S_\alpha(u)$, with $\alpha=8$.  Uniform allocation remains in the broader
unpruned ablation but is excluded from this focused weighted-kernel check.  For
each kernel, the unpruned C result must first match the corresponding Python
partition pointwise; the same native mixture, BIC, size, coherence, and pruning
parameters are then applied without retuning.

\subsection{External-method comparison protocol}\label{sec:external_comparison_protocol}

The final comparison uses the six datasets having nontrivial discrete
references and seven external methods: agglomerative clustering, $k$-means,
Gaussian mixtures, DBSCAN, OPTICS, KNN-affinity spectral clustering, and
HDBSCAN.  The uniform cloud has no reference partition, and the Swiss-roll
bands are an artificial discretisation of a continuous manifold; both are
therefore excluded from this table.  Every method receives the same raw feature columns, without
method-specific input preprocessing. RTKNNC standardises coordinates locally
inside each candidate parent only for its mixture-fitting stage, as described
in Sect.~\ref{sec:structural_pruning}; its KNN graph uses the raw features.
RTKNNC with structural refinement enabled uses the reference inverse-square vote and sweeps
$K=2,\ldots,16$ in the native executable.  Agglomerative clustering evaluates
Ward, complete, average, and single linkage, both without a connectivity
constraint and with symmetric KNN connectivity at 5, 10, and 15 neighbours.
$k$-means uses 20 initialisations; Gaussian mixtures use five initialisations
under full, tied, diagonal, and spherical covariance models.  These three
methods and spectral clustering receive the reference cluster count $C^*$.
Spectral clustering uses KNN affinity with $k_g=2,\ldots,16$, ARPACK
eigenvectors, and 20 label-assignment restarts at random seed zero.

DBSCAN sweeps $m\in\{3,5,10,20\}$ for \texttt{min\_samples}.  For each $m$,
let $d_i^{(m)}$ be the distance from $x_i$ to its $(m-1)$st proper neighbour,
because DBSCAN counts the query point itself.  Its $\epsilon$ candidates are
the 400 empirical quantiles of $\{d_i^{(m)}\}_{i=1}^N$ at
$q=0.0025,0.005,\ldots,1$.  OPTICS combines the same four values of $m$ with
$\xi\in\{0.01,0.025,0.05,0.1\}$ and minimum-cluster fractions
$\{0.01,0.025,0.05\}$.  HDBSCAN combines minimum cluster sizes
$\{5,10,20,50,100\}$, \texttt{min\_samples} in
$\{\texttt{None},5,10,20\}$, and excess-of-mass or leaf selection.  These
three density-based methods infer their reported cluster counts.

For each method--dataset pair, the table reports the candidate with maximum
ARI, then maximum NMI.  Remaining external-method ties prefer fewer noise
points and then the first candidate in the declared grid; RTKNNC ties prefer
smaller $K$. Thus, the result is a label-selected best-of-grid comparison
measuring what each preset parameter grid can represent. Reference labels
are used only to select the best candidate within each grid; ARI is not proposed
as an unsupervised deployment rule. A noise identifier $-1$ participates in ARI
and NMI as a predicted label, while the reported cluster count excludes it.
The comparison comprises 10,344 external fits and 90 RTKNNC fits.  Complete
external sweeps, selected rows, raw selected labels, software versions, input
hashes, and a provenance-checked table manifest accompany the article
artifacts.

\section{Results}\label{sec:results}

This section reports the empirical behaviour of RTKNNC under the experimental protocol described above. We first examine the controlled vote-kernel benchmark and then evaluate structural pruning, kernel sensitivity, and performance relative to the external clustering methods.

\subsection{Vote-model benchmark results}\label{sec:vote_results}

The complete sweep contains 840 benchmark runs: 480 runs from eight datasets, 15 values of $K$, and four kernels, plus 360 runs from six bridge fractions, 15 values of $K$, and four kernels. No benchmark run contained an unclustered point. Among the 210 matched dataset--$K$ comparisons across the two sweeps, the number of clusters differed across kernels in 60 combinations; ARI and NMI differed in 57 labelled combinations. The mean number of visible neighbours and the mean visibility both differed in all 210 combinations. The complete bridge occurs in both sweeps, so these totals include 15
repeated dataset--$K$ configurations (60 kernel runs). After counting each
configuration once, there are 195 configurations: cluster counts differ in
56, ARI and NMI in 53 labelled configurations, and both visibility summaries
in all 195. Thus, the allocation shape frequently changes the internal
visibility dynamics, although the resulting external optimum can remain stable.

Figure~\ref{fig:all_cluster_count_evolution} displays the complete cluster-count
trajectory for each primary dataset.  Every case begins with substantial
microfragmentation and then consolidates rapidly, but the macro-scale behaviour
is dataset dependent.  Some labelled datasets reach the reference count only
over a finite interval of $K$, whereas others stabilise at it or merge below it.
This complements the external scores: agreement in cluster count identifies a
candidate scale, but does not establish agreement with the reference partition.

\begin{figure}[p]
\centering
\includegraphics[width=4.8in,height=6.8in,keepaspectratio]{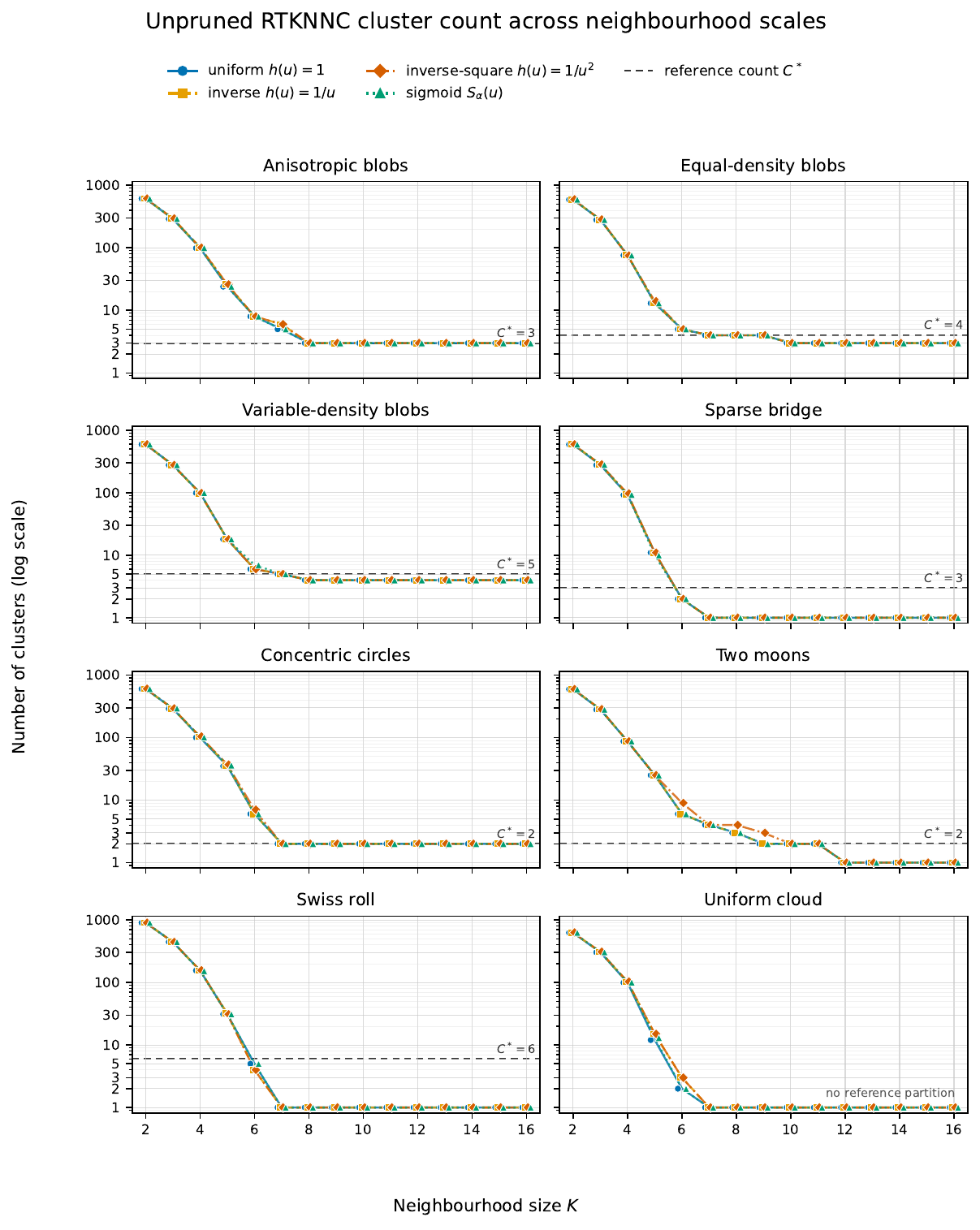}
\caption{Number of clusters as a function of $K$ for every primary dataset in
the controlled, unpruned four-kernel benchmark. Each point is obtained from an
independent execution with $K_0=K$, and the ordinate is logarithmic. Lines are
evaluated at the integer $K$ values; markers have small horizontal offsets only
to expose coincident counts from different kernels. Dashed lines show the
reference count $C^*$ for evaluation only. The Swiss-roll reference comprises
artificial bands, and the uniform cloud has no reference partition. Matching
$C^*$ is therefore neither an execution-time selection rule nor evidence of
pointwise partition correctness.}
\label{fig:all_cluster_count_evolution}
\end{figure}

For anisotropic clusters, equal-density blobs, circles, and moons, every kernel attained ARI=NMI=1 within the tested $K$ range. In this unpruned ablation, all kernels also attained the same best result for variable-density blobs (ARI $0.7817$, NMI $0.9035$, $K=8$) and for the labelled bridge dataset (ARI $0.8083$, NMI $0.8182$, $K=6$). These are unpruned optima, not the final results of the structural-pruning enhancement reported in Sect.~\ref{sec:structural_pruning_results}. For the discretised Swiss roll, the best scores were low, as expected for an artificial banding of a continuous manifold.  All optima occurred at $K=4$: uniform, inverse-distance, inverse-square, and sigmoid allocation attained ARI/NMI $0.1559/0.5160$, $0.1544/0.5141$, $0.1528/0.5131$, and $0.1558/0.5153$, respectively.

For the coloured comparison figures only, a common representative K is selected
for each labelled dataset by maximising the mean ARI across the four kernels,
with ties resolved in favour of the smaller K.  The same selected K is then
used in every predicted panel, so the graphical comparison does not grant a
different scale to each kernel.  This visualisation rule does not filter the
long-format results, which retain all $K=2,\ldots,16$ observations.

Figure~\ref{fig:moons_vote_models} provides a representative non-convex case in
which all four kernels recover the reference partition exactly.  The more
difficult variable-density case is reserved for the focused, pruned
three-kernel mosaic in Fig.~\ref{fig:blobs_pruning_vote_kernels}.

\begin{figure}[!tp]
\centering
\includegraphics[width=4.8in,height=6.8in,keepaspectratio]{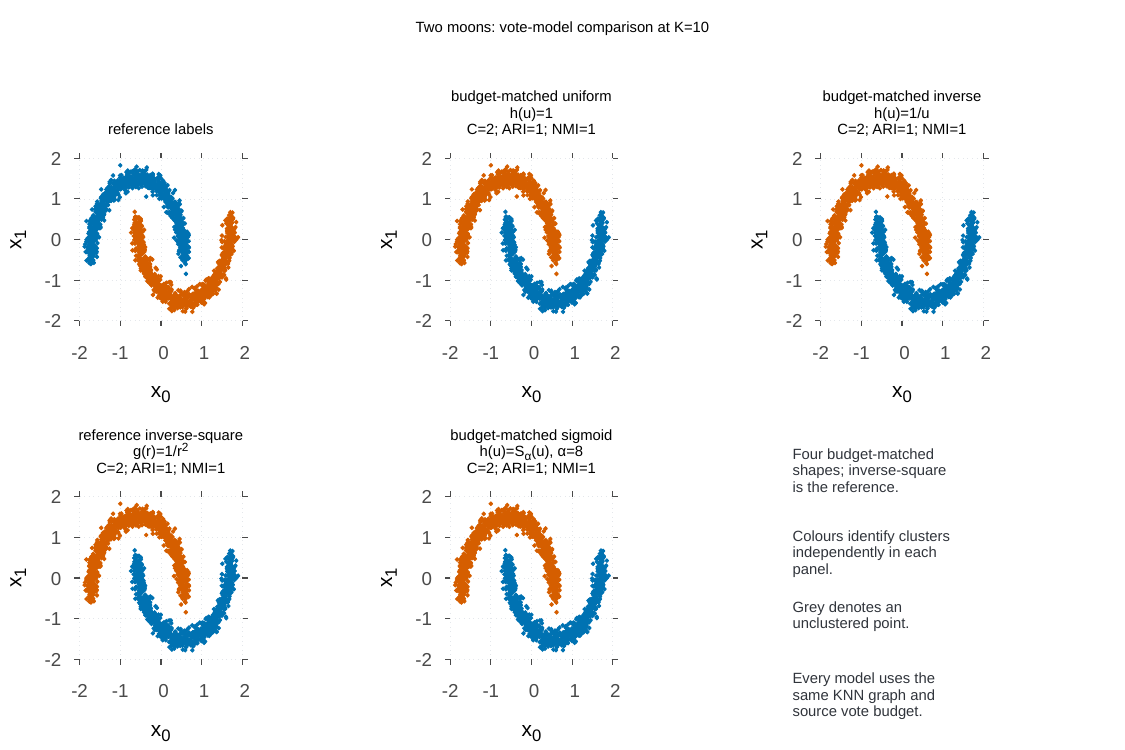}
\caption{Reference labels and the four predicted partitions for the two-moons
dataset at the common representative scale $K=10$: budget-matched uniform
$h(u)=1$, inverse-distance $h(u)=1/u$, reference inverse-square
$h(u)=1/u^2$, and local sigmoid $S_\alpha(u)$ with $\alpha=8$.  Every source
casts the common budget in Eq.~\eqref{eq:reference_source_budget}; the panels
differ only in its allocation among outgoing neighbours.  Every vote model recovers the two
non-convex components exactly (ARI${}={}$NMI${}={}$1). Cluster colours are
local to each panel and do not encode a cross-panel label correspondence.}
\label{fig:moons_vote_models}
\end{figure}

\subsubsection{Bridge-fraction transition}

Table~\ref{tab:bridge_transition} reports the first value of $K$ at which the two dense cores share a dominant predicted cluster. No kernel merged the disconnected cores when all bridge points were removed. At bridge fraction $0.10$, the reference inverse-square allocation merges one scale later than the other three; the boundaries coincide at the remaining tested fractions. This is direct evidence that the budget-matched shapes are not equivalent implementations.

\begin{table}[t]
\caption{First $K$ producing a merger of the two dense bridge cores for each of
the four vote kernels. A value $>16$ means that no merger occurred in the
tested range.}\label{tab:bridge_transition}
\centering
\begin{tabular}{rcccc}
\toprule
bridge fraction & $h(u)=1$ & $h(u)=1/u$ & $h(u)=1/u^2$ & $S_\alpha(u)$\\
\midrule
0.00 & $>16$ & $>16$ & $>16$ & $>16$\\
0.10 & 5 & 5 & 6 & 5\\
0.25 & 6 & 6 & 6 & 6\\
0.50 & 6 & 6 & 6 & 6\\
0.75 & 7 & 7 & 7 & 7\\
1.00 & 7 & 7 & 7 & 7\\
\bottomrule
\end{tabular}
\end{table}

For the complete bridge at $K=6$, the two reported components also do not
correspond to the three reference labels.  One contains all 900 points of the
left core, all 200 bridge points, and six peripheral points from the right
core; the other contains the remaining 894 right-core points.  The bridge has
therefore already been absorbed by the left component even though the two dense
cores have not yet fully merged.

\subsection{Native structural-pruning results}
\label{sec:structural_pruning_results}

Table~\ref{tab:native_structural_pruning} separates the native pruning
experiment from the four-kernel sweep.  The variable-density case changes from
four to five clusters at $K=8$, with ARI increasing from $0.7817$ to $0.9627$.
The accepted subdivision has 12 crossing visible connections. The five
inferred cluster sizes are $375$, $400$, $400$, $401$, and $424$; these sizes
show that matching the reference cluster count does not imply exact label
recovery.

For the complete sparse bridge, pruning recovers three components at both
tested scales.  At $K=6$, ARI increases from $0.8083$ to $0.9853$, and 10
visible connections cross the accepted subdivision.  At $K=10$, the unpruned
closure has already collapsed to one component, but an accepted three-child
subdivision has 32 crossing connections and reaches ARI $0.9865$.  Circles,
moons, and anisotropic blobs serve as preservation controls: no split is
accepted, and their ARI remains exactly $1$.

\begin{figure}[t]
\centering
\includegraphics[width=4.8in,height=6.8in,keepaspectratio]{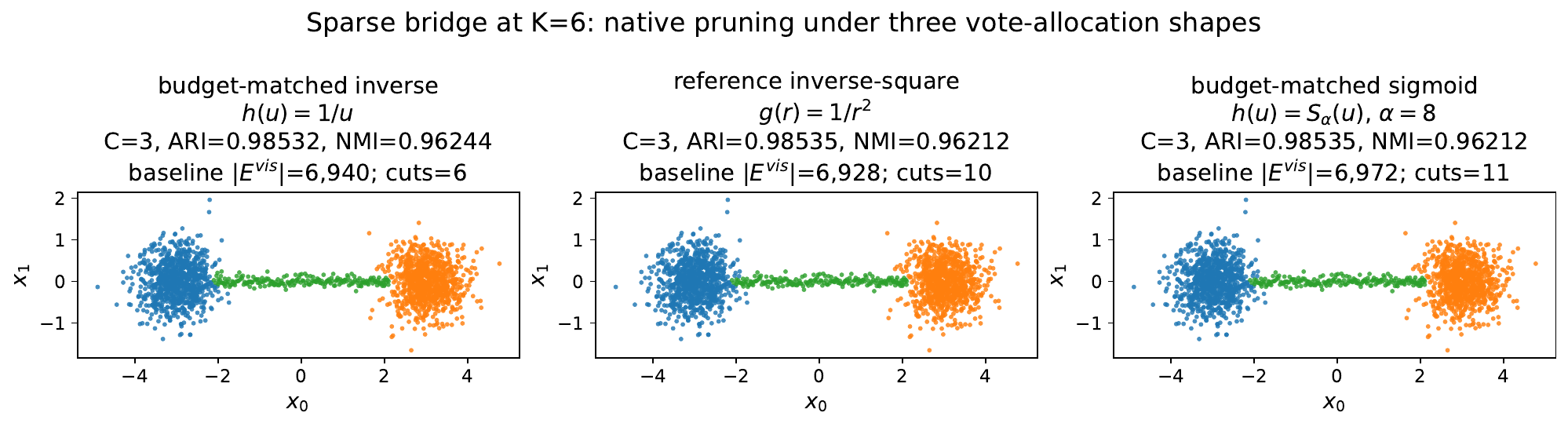}
\caption{Native structural pruning of the complete sparse bridge at $K=6$
under the three weighted allocation shapes.  The panels show budget-matched
$h(u)=1/u$, reference $h(u)=1/u^2$, and budget-matched local sigmoid
$S_\alpha(u)$ with $\alpha=8$, respectively.  Every model produces three
clusters.  Inverse-square and sigmoid allocation coincide pointwise (ARI
$0.98535$, NMI $0.96212$), while inverse-distance differs from them at four
points after label alignment (ARI $0.98532$, NMI $0.96244$).  The accepted
subdivisions have 6, 10, and 11 crossing visible connections, and their annotated native
unpruned visible-edge counts are 6940, 6928, and 6972, respectively.
The distinct unpruned visible graphs yield identical final partitions under
inverse-square and sigmoid allocation. Reference labels are used only for
external evaluation and colour alignment.}
\label{fig:bridge_pruning_vote_kernels}
\end{figure}

\begin{table}[t]
\caption{Native inverse-square results before and after structural pruning. $C^*$
is the number of reference classes, $C_0$ and $C_p$ are the unpruned and pruned
cluster counts, and $|E^{\mathrm{cut}}|$ is the number of crossing visible
connections in the accepted subdivision (zero when no split is accepted).
Labels are used only for the reported ARI and $C^*$.}
\label{tab:native_structural_pruning}
\centering
\small
\begin{tabular}{lrrrrrrr}
\toprule
dataset & $K$ & $C^*$ & $C_0$ & ARI$_0$ & $C_p$ & ARI$_p$ & $|E^{\mathrm{cut}}|$\\
\midrule
\input{native_structural_pruning_table_rows.tex}\\
\bottomrule
\end{tabular}
\end{table}

\subsubsection{Pruning sensitivity to the vote kernel}

Figures~\ref{fig:bridge_pruning_vote_kernels} and
\ref{fig:blobs_pruning_vote_kernels} address whether the successful split is
specific to the reference inverse-square vote.  For the variable-density
blobs, all three weighted allocation shapes start from four unpruned components and
produce the same five-cluster partition after pruning: ARI $0.9627$, NMI
$0.9614$.  Each accepted subdivision has 12 crossing visible connections.
The three partitions are pointwise
identical up to label permutation.  Their mean visible-neighbour counts are
not identical ($4.7010$, $4.6825$, and $4.7350$, respectively), confirming
that the kernels alter the visible graph.  In the companion Python edge audit,
whose partitions agree pointwise with the native implementation, the inverse-square directed
edge set differs from the inverse-distance and sigmoid sets by 89 and 269
visible edges, respectively.  These changes remain inside the same connected
components. The accepted pruning split also coincides empirically at this
scale; component agreement alone does not imply equality of mixture fits or
regularised subdivisions.

The bridge result is also stable in its scientific conclusion.  Every kernel
starts from two components and returns the intended three, with ARI between
$0.98532$ and $0.98535$.  Inverse-square and sigmoid allocation induce the same
partition, while inverse-distance differs by four peripheral assignments after
optimal label alignment.  Their accepted subdivisions have 10, 11, and 6 crossing connections,
respectively, when listed as inverse-square, sigmoid, and inverse-distance. In
the companion Python edge audit, the inverse-square unpruned visible edge set
differs by 56 and 190 directed edges from inverse-distance and sigmoid voting,
respectively.  The equality of cluster counts is
therefore supported by near-identical external partitions, not used as a
substitute for them.

\begin{figure}[t]
\centering
\includegraphics[width=4.8in,height=6.8in,keepaspectratio]{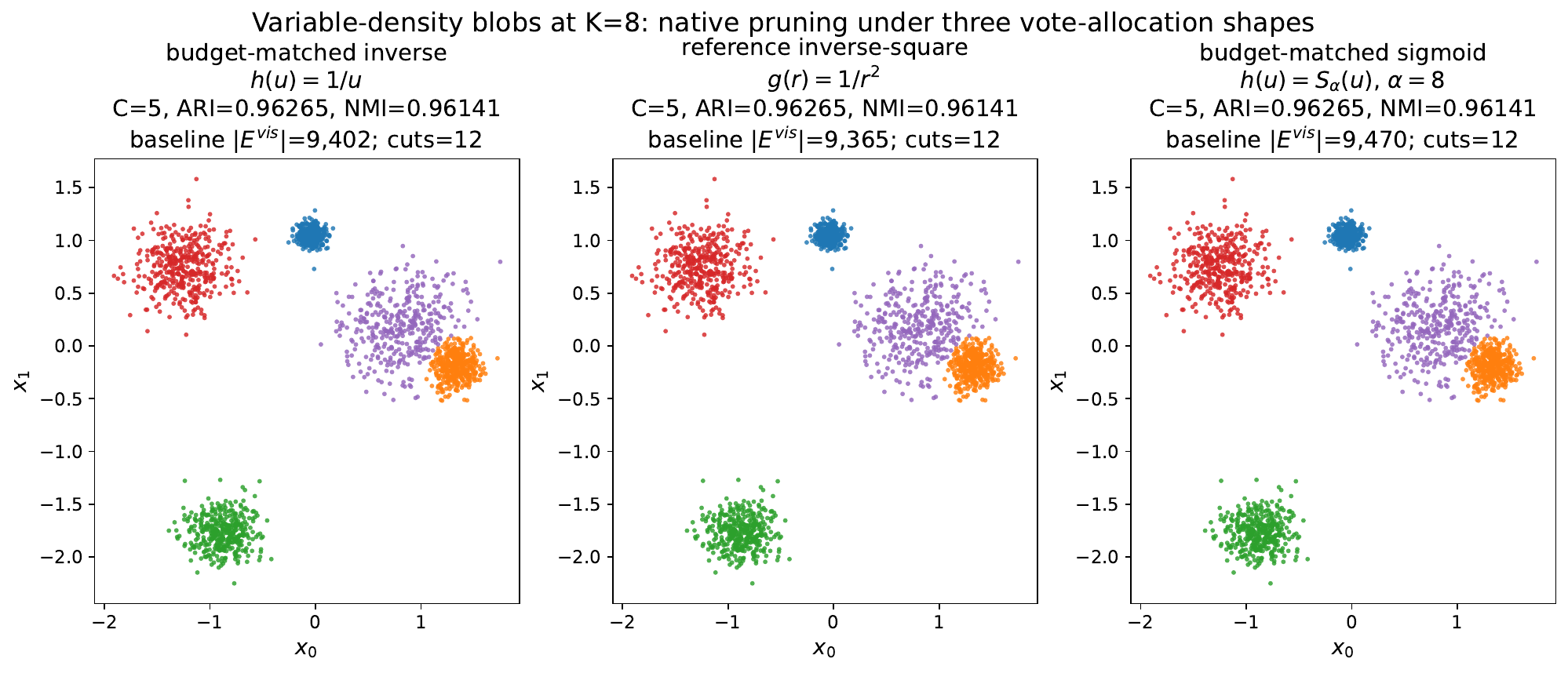}
\caption{Native structural pruning of the variable-density blobs at $K=8$
under the three weighted allocation shapes.  The three panels show, from left
to right, budget-matched $h(u)=1/u$, reference $h(u)=1/u^2$, and
budget-matched local sigmoid $S_\alpha(u)$ with $\alpha=8$.  All produce the
same pointwise five-cluster partition (ARI
$0.96265$, NMI $0.96141$) after structural refinement; each accepted
subdivision has 12 crossing visible connections.
The panel annotations expose the distinct native unpruned graphs:
$|E^{\mathrm{vis}}|=9402$, 9365, and 9470 directed edges, respectively.
Reference labels are omitted from the mosaic and are used only for colour
alignment and external scores; each unpruned C partition was first verified
pointwise against its Python kernel definition.}
\label{fig:blobs_pruning_vote_kernels}
\end{figure}

\subsection{Comparison with external clustering methods}
\label{sec:baseline_comparison}

Table~\ref{tab:all_methods_comparison} reports the label-selected best-of-grid
comparison defined in Sect.~\ref{sec:external_comparison_protocol}. RTKNNC attains
ARI${}={}$NMI${}={}$1 on anisotropic blobs, equal-density blobs, circles, and
two moons.  Several external methods tie these results, but the controls also
expose their inductive biases: $k$-means and Gaussian mixtures fail on circles
and two moons, whereas graph-, density-, and connectivity-constrained methods
recover the non-convex partitions. As expected from its nearest-centroid
geometry, $k$-means induces convex Voronoi cells, which cannot represent two
concentric rings adequately and only approximate the curved, interleaving
moons.  Consequently, it obtains ARI/NMI
$-0.0004/0.0001$ on circles and $0.4800/0.3815$ on two moons.  This limitation
persists even under the favourable protocol used here, in which $k$-means is
given the reference cluster count $C^*=2$.

The two connectivity-stress cases provide finer discrimination.  On the
variable-density blobs, RTKNNC returns the intended five clusters with ARI
$0.9627$ and NMI $0.9614$, narrowly below the full-covariance Gaussian-mixture
result $0.9639/0.9636$.  Spectral clustering reaches $0.9569/0.9581$, while
agglomerative clustering, $k$-means, OPTICS, and HDBSCAN obtain ARIs between
$0.9052$ and $0.9306$.  DBSCAN reaches ARI $0.7916$ and NMI $0.8044$; its
score-maximising candidate fragments the data into 15 clusters and marks 204
points as noise.

On the sparse bridge, RTKNNC attains the highest score,
$0.9865/0.9632$, followed closely by the full-covariance Gaussian mixture at
$0.9831/0.9552$.  Spectral clustering reaches $0.9487/0.8993$, and the other
external ARIs range from $0.6723$ to $0.9342$. The best-of-grid RTKNNC result
for the sparse bridge selects $K=11$ by a marginal ARI advantage over $K=10$;
the focused $K=6$ experiment remains useful because it isolates the earliest
corrected three-component partition under the three vote kernels.

\input{all_methods_table.tex}

The comparison therefore supports competitiveness rather than universal
superiority. RTKNNC matches the best ARI on four benchmark datasets. On the
variable-density dataset it obtains ARI $0.9627$, compared with $0.9639$ for
the full-covariance GMM, while on the sparse bridge it obtains the highest ARI,
$0.9865$, compared with $0.9831$ for the GMM. The fixed-count baselines have
the advantage of receiving $C^*$, whereas RTKNNC and the three density-based
methods do not. Future work will
extend the evaluation to repeated stochastic realisations, higher-dimensional
data, and label-free scale selection.

\section{Discussion}\label{sec:discussion}

RTKNNC combines established KNN-graph ingredients into a specific
visibility-filtered closure operator. Its defining step is recursive forward
and reverse-neighbour traversal under a popularity-modulated RMS visibility
filter, iterated to saturation. The optional,
popularity-seeded structural stage operates after closure and does not
replace the graph operator.

Before structural refinement, RTKNNC is defined by an algorithmic graph construction rather than by
minimising an explicit objective function. Changing only the vote allocation
altered the cluster count in 60 of 210 matched sweep comparisons and visibility in
all 210, although several datasets had the same best external partition under
all kernels. Vote allocation controls incoming evidence; the round-trip graph
and its closure control how visible evidence propagates. We report structural
pruning separately because its BIC criterion adds a local statistical model.

The inverse-square model defines the common source budget and is recovered
exactly by the budget-matched formulation.  A raw comparison against $1$,
$1/r$, and a bounded sigmoid would, however, mix kernel shape with incompatible
units and outgoing amplitudes; on these data, the raw sigmoid was consequently
almost indistinguishable from uniform voting.  The source-budget-matched family
removes that confound by conserving $B_K(x_i)$ for every source and changing
only how that mass is distributed among its $K$ outgoing edges.  It also makes
$P_{K,g}R_K^{(2)}$ invariant under global coordinate rescaling for every model,
not just inverse-square.  Feature-wise transformations can still change the
neighbour graph, so preprocessing remains part of the method specification.

On the two difficult datasets, the pruning result was not specific to
inverse-square voting. This does not imply kernel equivalence. For the
variable-density blobs, the three kernels
produce pointwise-identical final partitions even though their native mean
visible-neighbour counts differ and the companion audit finds hundreds of
kernel-dependent directed edges.  Those edge changes occur within the same
symmetrised connected components, and the mixture stage accepts the same
geometric subdivision.  For the bridge, the stability is only approximate:
inverse-distance voting changes four assignments relative to the pointwise-identical inverse-square and sigmoid results, a distinction hidden by
three-decimal ARI and NMI. For the variable-density blobs, these results identify a range in which
different visible graphs still lead to the same final partition, while the
bridge shows that small kernel effects remain measurable. These experiments establish robustness at
the reported scales, not general kernel invariance.

The bridge-fraction experiment provides a local sensitivity check. At retained bridge
fraction $0.10$, uniform, inverse-distance, and sigmoid allocation first merge
the dense cores at $K=5$, whereas reference inverse-square first merges them
at $K=6$.  This observed boundary difference confirms that equal results at
the complete-bridge reference scale do not imply kernel equivalence.  A denser
sweep over bridge geometry, noise, separation, and sigmoid slope may expose
further transition differences.

\subsection{Connectivity failure and structural correction}

The observed failures follow directly from the topology of the closure
operator. Recall the visible directed edge set
$E^{\mathrm{vis}}_{K,g}$ from Eq.~\eqref{eq:visible_edge_set} and its
symmetrised counterpart
$\overline E^{\mathrm{vis}}_{K,g}$ from
Eq.~\eqref{eq:symmetrised_visible_edge_set}.
By Proposition~\ref{prop:closure_component}, and since
Eq.~\eqref{eq:nonempty_visible_neighbourhood} ensures that no saturated
closure is a singleton under the stated assumptions, the unpruned saturated
closures are exactly the connected components of this symmetrised visible graph.
Before structural refinement, RTKNNC
can therefore recover a reference partition exactly only at a scale for
which its classes occupy separate visible connected components.

The enhanced method relaxes this connectivity restriction in a controlled
way. Eqs.~\eqref{eq:pruning_acceptance}--\eqref{eq:pruning_cut} permit a
large connected component to be subdivided only when the mixture, size, and
visible-graph coherence criteria agree. The structural stage therefore does
not alter the KNN construction or redefine visibility; it acts only after
the unpruned closure has been formed.

This condition explains why nonconvexity alone does not cause failure.
Although moons and circles are nonconvex, at a suitable value of $K$ the
low-density regions between their sampled components leave no visible path connecting them in the symmetrised visible KNN graph. The variable-density example
is different: the compact and diffuse right-hand Gaussians overlap in the
observed space. Points in the overlap create cross-class nearest-neighbour
edges, and an admissible chain places the two latent classes inside the same
unpruned saturated closure.

For the reference inverse-square model,
Proposition~\ref{prop:inverse_square_coarsening} further shows that increasing
$K$ cannot remove an already visible edge. Thus, once those two populations
belong to the same connected component at a given scale, increasing $K$ cannot
separate them again in the unpruned layer. Reducing $K$ may break the
connection, but in the variable-density experiment it produces
microfragmentation before recovering the latent separation. At $K=8$, the
structural stage instead detects two supported populations inside the merged
closure: BIC favours a two-component subdivision, and both proposed children
satisfy the visible-graph coherence requirements. This restores five clusters
with ARI $0.9627$. The remaining disagreement is consistent with the ambiguity
created by overlapping coordinates, which do not define a uniquely recoverable
boundary.

The sparse bridge creates the analogous connectivity failure through a
filament rather than direct overlap. At $K=6$, the bridge has already been
absorbed into the left-core component, while most of the right core remains a
second component. Because unpruned round-trip clustering operates through
connected closure, one admissible cross-boundary path is sufficient to join
all vertices reachable through that path. The structural stage splits the left-core--bridge component into two
children while retaining the other component, yielding three final clusters
at this scale. The accepted subdivision has 10 crossing visible connections
and preserves the bridge as a third structure,
raising ARI to $0.9853$. At $K=10$, even the unpruned partition has collapsed
to one component, but the accepted three-child subdivision restores a
three-cluster solution with ARI $0.9865$.

The Swiss roll presents a complementary case. Its reference bands discretise
one continuous manifold, so selective visibility yields many local connected
components rather than the prescribed bands, whereas less selective scales
eventually connect across them. The four-component veto and graph-coherence
test are intended to protect such curved or extensively fragmented structures,
but a systematic manifold study remains necessary.

The unpruned accepted closures are therefore induced by visible KNN
connectivity and form a partition under the stated assumptions, as
confirmed by the implementation checks in the reported experiments. The enhanced output is obtained after a
conservative local model-based test of the large unpruned components.

The present experiments establish the behaviour of RTKNNC under controlled
synthetic conditions and isolate the roles of vote allocation, visible-graph
connectivity, and structural pruning. Further work may extend this analysis to
repeated stochastic realisations, higher-dimensional datasets, additional
graph-based baselines, alternative covariance models, and label-free
strategies for selecting the neighbourhood scale. A broader parameter study
of bridge geometry and sigmoid slope may also help characterise the transition
boundaries of the different vote kernels.

\section{Conclusion}\label{sec:conclusion}

We presented Round-Trip KNN Clustering, a method that combines directed KNN
neighbourhoods, reverse-neighbour voting, and local visibility to identify
cluster structure across neighbourhood scales. We also introduced an optional
structural refinement that can separate supported populations when sparse
bridges or local overlap connect them into a single component. Across eight datasets and all
15 tested values of $K$ from 2 to 16, the reference C implementation and its
Python counterpart produced identical cluster partitions. The 840-run kernel
sweep then showed that vote allocation often changes visibility and sometimes
the induced partition.  The native enhancement then corrected the
central connectivity failures: variable-density blobs changed from four to
five clusters with ARI $0.7817\to0.9627$, and the bridge reached three clusters
with ARI $0.9853$ at $K=6$ and $0.9865$ at $K=10$.  It accepted no split on
three already correct controls, whose ARI remained $1$.  At the two difficult
reference scales, inverse-distance, inverse-square, and sigmoid voting all
preserved the corrected cluster counts with essentially identical scores, and
their six unpruned native partitions passed pointwise C--Python agreement.
A label-selected best-of-grid comparison comprising 10,434 candidate fits
across RTKNNC and seven external methods showed that RTKNNC with structural
refinement enabled matches ARI $1$ on four benchmark datasets, obtains ARI
$0.9627$ on the variable-density dataset versus $0.9639$ for a full-covariance
GMM, and attains the highest sparse-bridge ARI, $0.9865$ versus $0.9831$ for
the GMM.  These results separate the roles of vote weighting, visible connectivity, and
structural refinement in the behaviour of RTKNNC. Future work will examine
repeated stochastic realisations and higher-dimensional data, as well as
alternative covariance models, label-free scale selection, and additional
graph-based baselines.

\bmhead{Statements and Declarations}

\noindent\textbf{Competing interests.}
The authors declare no competing interests.

\medskip
\noindent\textbf{Ethics approval and consent to participate.}
Not applicable.

\medskip
\noindent\textbf{Consent for publication.}
Not applicable.

\medskip
\bmhead{Data and code availability}
The synthetic datasets used in this study, including their reference labels, and the custom C and Python implementations are publicly available in the project repository at \url{https://github.com/epmarinho/rtknnc}.

\bibliography{references}
\end{document}

%% file: native_structural_pruning_table_rows.tex
Variable-density blobs & 8 & 5 & 4 & 0.7817 & 5 & 0.9627 & 12 \\
Sparse bridge & 6 & 3 & 2 & 0.8083 & 3 & 0.9853 & 10 \\
Sparse bridge & 10 & 3 & 1 & 0.0000 & 3 & 0.9865 & 32 \\
Circles & 7 & 2 & 2 & 1.0000 & 2 & 1.0000 & 0 \\
Two moons & 10 & 2 & 2 & 1.0000 & 2 & 1.0000 & 0 \\
Anisotropic blobs & 8 & 3 & 3 & 1.0000 & 3 & 1.0000 & 0

%% file: all_methods_table.tex
\begin{table*}[t]
\caption{Best label-selected grid result for RTKNNC with structural refinement enabled and
external clustering methods. Each cell reports ARI/NMI; bold denotes the
highest ARI in the column, including ties. Methods marked $\dagger$ receive the
reference cluster count $C^*$; the others do not. Reference labels are used
only for scoring and best-of-grid selection, never as clustering features.}
\label{tab:all_methods_comparison}
\centering
\resizebox{\textwidth}{!}{%
\begin{tabular}{l*{6}{c}}
\toprule
Method & Anisotropic & Equal-density blobs & Variable-density blobs &
Sparse bridge & Circles & Two moons \\
\midrule
$C^*$ & 3 & 4 & 5 & 3 & 2 & 2 \\
\midrule
RTKNNC
& \textbf{1.0000/1.0000}
& \textbf{1.0000/1.0000}
& 0.9627/0.9614
& \textbf{0.9865/0.9632}
& \textbf{1.0000/1.0000}
& \textbf{1.0000/1.0000} \\

Agglom.$^\dagger$
& \textbf{1.0000/1.0000}
& \textbf{1.0000/1.0000}
& 0.9306/0.9444
& 0.9153/0.8580
& \textbf{1.0000/1.0000}
& \textbf{1.0000/1.0000} \\

K-means$^\dagger$
& \textbf{1.0000/1.0000}
& \textbf{1.0000/1.0000}
& 0.9189/0.9338
& 0.9342/0.8780
& -0.0004/0.0001
& 0.4800/0.3815 \\

GMM$^\dagger$
& \textbf{1.0000/1.0000}
& \textbf{1.0000/1.0000}
& \textbf{0.9639/0.9636}
& 0.9831/0.9552
& -0.0004/0.0001
& 0.5096/0.4080 \\

DBSCAN
& \textbf{1.0000/1.0000}
& \textbf{1.0000/1.0000}
& 0.7916/0.8044
& 0.8140/0.7422
& \textbf{1.0000/1.0000}
& \textbf{1.0000/1.0000} \\

OPTICS
& \textbf{1.0000/1.0000}
& 0.9987/0.9977
& 0.9266/0.9262
& 0.6723/0.6491
& \textbf{1.0000/1.0000}
& 0.9990/0.9972 \\

Spectral$^\dagger$
& \textbf{1.0000/1.0000}
& \textbf{1.0000/1.0000}
& 0.9569/0.9581
& 0.9487/0.8993
& \textbf{1.0000/1.0000}
& \textbf{1.0000/1.0000} \\

HDBSCAN
& \textbf{1.0000/1.0000}
& 0.9940/0.9902
& 0.9052/0.9075
& 0.8641/0.7828
& \textbf{1.0000/1.0000}
& \textbf{1.0000/1.0000} \\
\bottomrule
\end{tabular}%
}
\end{table*}